\documentclass[11pt]{article}

\usepackage[margin=1in]{geometry}
\usepackage[T1]{fontenc}
\usepackage[utf8]{inputenc}

\usepackage{amsmath,amssymb,amsthm,mathtools}

\usepackage[numbers]{natbib}
\usepackage[hidelinks]{hyperref}

\newtheorem{theorem}{Theorem}

\newtheorem{proposition}{Proposition}
\newtheorem{corollary}{Corollary}
\newtheorem{definition}{Definition}

\DeclareMathOperator{\E}{\mathbb{E}}

\newcommand{\SOTM}{\mathcal{M}}

\renewcommand{\Pr}{\mathbb{P}}
\newcommand{\Fset}{\mathcal{F}}
\newcommand{\Env}{\mathcal{E}}
\newcommand{\Task}{T}
\newcommand{\Transcript}{\Gamma}
\newcommand{\Oracle}{\mathcal{O}}
\newcommand{\Dist}{\mathcal{D}}
\newcommand{\Kb}{\mathcal{K}}
\newcommand{\tok}{\mathrm{tok}}
\newcommand{\TOK}{\mathrm{TOK}}

\newcommand{\korch}{\kappa_T^{\mathrm{orch}}}
\newcommand{\kag}{\kappa_T^{\mathrm{ag}}}

\title{\textbf{Computing with Agentic Oracles}}
\author{Jie Wang\,\thanks{Richard A. Miner School of Computing and Information Sciences,
University of Massachusetts, Lowell, MA 01854, USA.}\\[0.5ex]
\small\texttt{Jie\_Wang@uml.edu}}
\date{}

\begin{document}

\maketitle

\begin{abstract}
This paper extends the stochastic-oracle model of AI-augmented computing to
include agentic oracles. Unlike a stationary stochastic oracle, which responds to the
same query according to a fixed response distribution across calls, an agentic oracle
can pursue a goal autonomously and may access an environment containing
task-relevant resources. These capabilities affect both response
distributions and token costs beyond what is visible at the query-response
interface. We develop a framework for analyzing token costs in Stochastic-Oracle
Turing Machines (SOTMs) that compute with agentic oracles.
Each call has an \emph{orchestration token cost}, visible to the caller at the
query-response interface, and an \emph{agentic token cost}, incurred by
internal operations not exposed to the caller. We show that an SOTM computing
with an agentic oracle that can retain intermediate state can have token-cost
advantages over SOTMs using stationary stochastic oracles when solving the same
task at the same quality level, both with and without environment access. We
also investigate goal-loss risk,
including how internal dispatch ordering can reduce exposure to irreversible
actions. We provide a goal-loss avoidance criterion, derive
progress--retry--goal-loss formulas,
establish goal-depth lower bounds on token complexity, characterize token
complexity when the probability of goal loss is zero, and show that goal-loss
risk can impose an upper bound on the achievable quality of a task involving
environment updates.
\end{abstract}

\section{Introduction}

The Stochastic-Oracle Turing Machine (SOTM) framework~\cite{Wang2025,Wang2026,Wang2026comp} models
AI-augmented computation as a probabilistic Turing machine (PTM) directing a
computation with access to a stochastic oracle. The PTM performs computation,
decides when and what to query, and processes oracle responses, while the oracle
supplies knowledge or capabilities through responses drawn from query-dependent
response distributions. The oracle may
return the response generated by an underlying model directly, or it may use an
internal query-processing mechanism to route the query to one or several
models, aggregate model outputs, or follow a predetermined pipeline such as
retrieve-then-generate. In this paradigm, the stochastic oracle behaves
passively: the same query has the same response distribution on every call, and
the oracle does not autonomously access an environment.

Agentic oracles extend this paradigm by adding autonomous goal-directed control
and environment access as two distinct capabilities. On a query, an
agentic oracle may decompose the
goal, generate internal subqueries, invoke models or tools, evaluate
intermediate results, revise its plan, and repeat this loop before returning a
final response. Environment access may be read-only or read--write: the oracle
may inspect, modify, or otherwise act on task-relevant resources. Such an environment may
include documents, directories, codebases, databases, and APIs that the oracle
can inspect or modify, together with Web resources and other global resources
available as sources of information.
Either capability is sufficient for treating an oracle as agentic in this
paper, and in deployed systems they often appear together.

The PTM in an SOTM is the \emph{directing PTM}: it controls the
computation, forms oracle queries, processes oracle responses, and decides
when to halt or whether to continue interacting with a user or oracle.
To the directing PTM, the agentic oracle is a black box. The PTM submits a
query and receives a final response on its query-response tape, but it does not
control the oracle's internal decomposition, tool use, model calls, state
updates, or evaluation steps. These internal operations affect both the response
distribution and the token cost of computation. In particular, two calls with
similar query and response lengths may have very different internal
token costs, because one may require a short lookup while another may trigger a
long autonomous search, repair, or verification loop. This makes token-cost
analysis for agentic computation different from token-cost analysis for
stationary stochastic oracles~\cite{Wang2026,Wang2026comp,Wang2026prac,Wang2026cert}.

The first issue is token cost analysis. A call to an agentic oracle incurs two
types of token cost: an \emph{orchestration token cost}, visible on the PTM's
query-response tape, and an \emph{agentic token cost}, incurred by internal
operations not exposed to the PTM. The total execution-time token cost is the
sum of these two components. When
agentic token cost dominates, reducing the number of queries
made by the PTM may matter more than reducing the length of individual queries. When
orchestration cost dominates, the problem resembles the stochastic-oracle
setting. This distinction is especially important for commercial agentic
systems, where the internal dispatch mechanism, member models, and tool calls
may not be exposed through the query-response interface, while token expenditure
may be visible only through billing or provider-side usage reports.

The second issue is how \emph{agentic SOTMs}---SOTMs with an agentic oracle---can
benefit from delegation: they allow the directing PTM to delegate goal-directed
work rather than to code every intermediate step of a computation. This can
reduce development effort and, in some settings, can also reduce
execution-time token cost. An
SOTM using a stationary stochastic oracle can keep
transcripts and summaries on its own tapes, but information needed by the
stationary oracle must repeatedly cross the query-response interface. An
agentic oracle may instead retain the relevant state internally or internalize
it into its own mechanism. This creates token-cost advantages even when the
agentic oracle has the same knowledge and response-generating capacity as a
stationary stochastic oracle.

The third issue is goal-loss risk. In computation with a stationary stochastic
oracle, an erroneous response usually costs only another query
\cite{Wang2026cert,Wang2026comp}. In an
environment, an agentic oracle may take, recommend, or trigger an action that
changes the environment state. Some actions keep the computation on track:
the action makes the computation one step closer to achieving the goal. Other actions cause goal loss:
the goal was achievable before the action, but after the action the new state
renders the goal unachievable unless the computation can retract to a previous
state and apply different actions. Thus agentic computation has a risk
dimension absent from computation with stationary stochastic oracles. When every
solution path repeatedly incurs goal-loss risk, more queries do not
guarantee achieving the desired quality.

We investigate these issues in this paper. We recall the SOTM framework, tasks,
token counts, token costs, and token complexity in Section~\ref{sec:prelim}.
We define agentic oracles, environment tasks, agentic SOTMs, orchestration and
agentic token costs, and basic token-cost analysis results in
Section~\ref{sec:model}. We study token-cost advantages of SOTMs that compute
with agentic oracles capable of retaining intermediate state, compared with
SOTMs using stationary stochastic oracles, both with and without environment
access, in Section~\ref{sec:advantages}. We investigate irreversible actions
through goal loss, avoidance criteria, progress--retry--goal-loss formulas,
goal-depth lower bounds on token complexity, the case of zero goal-loss
probability, recovery of the stationary-oracle SOTM, and a task-level converse
for unavoidable goal-loss risk in Section~\ref{sec:foreclose}. Finally, we
conclude and describe future directions and open problems in
Section~\ref{sec:future}.

\section{Preliminaries: The SOTM Framework}
\label{sec:prelim}

We extend the Stochastic-Oracle Turing Machine (SOTM) framework introduced by
Wang~\cite{Wang2025}, using its later consolidated form
\cite{Wang2026,Wang2026comp,Wang2026prac},
which allows each query to include part or all of the prior query-response
transcript and permits fixed background information shared across all
instances of a task. This section recalls the necessary definitions. A fixed
finite alphabet $\Sigma$ is used to encode all queries, responses, inputs,
outputs, background materials, and environment information, including actions
and observations.

\paragraph{Stationary stochastic oracles.}
A \emph{stationary stochastic oracle} is a mechanism that, on a query string
$q\in\Sigma^\ast$, draws a response from a distribution $\Dist_q$ over
$\Sigma^\ast$. The query string $q$ encodes a query instruction.
It may also encode auxiliary information supplied by the caller, such as an
input instance, fixed background materials, or records of earlier query-response
pairs with the oracle. Each auxiliary component may be included
in full, in part, or omitted.
For every call on the same query string $q$, the response is drawn from the
same distribution $\Dist_q$.

When there is no confusion, we may use \emph{stochastic oracle} or
\emph{stationary oracle} to mean \emph{stationary stochastic oracle}. A
stationary stochastic oracle is denoted by $\Oracle$. A stationary stochastic
oracle may still have a nontrivial internal mechanism.
For example, it may route a query through a predetermined pipeline, call several
models, aggregate their outputs, or apply a predetermined retrieve-then-generate
procedure. It is stationary because repeated calls on the same query draw
responses from the same distribution.

\paragraph{Stochastic-Oracle Turing Machines.}
A \emph{Stochastic-Oracle Turing Machine} is a pair
$\mathcal{M}=(M,\Oracle)$, where $M$ is a probabilistic Turing machine (PTM)
and $\Oracle$ is a stochastic oracle. The PTM $M$ has no access to the
construction of $\Oracle$ or to responses drawn from $\Dist_q$ except by
querying $\Oracle$.

The PTM $M$ has one distinguished read-only input tape containing an input
string $x \in \Sigma^*$. It also has zero or more read-only background-input
tapes containing fixed background materials $\Kb_1,\ldots,\Kb_m$, together
with work tapes, an output tape, a random-source tape, and a query-response
tape. We write $\Kb=(\Kb_1,\ldots,\Kb_m)$ for the collection of background
materials. When no background material is supplied, $m=0$ and $\Kb=\emptyset$.
The full input context is $\mathcal I=x$ if $\Kb=\emptyset$ and
$\mathcal I=(x;\Kb)$ otherwise; formulas written with $\mathcal I$ use this
convention.

The query-response tape mediates the interaction between $M$ and $\Oracle$:
when $M$ writes a query string $q \in \Sigma^*$ on this tape, $\Oracle$ draws
$r \sim \Dist_q$ and writes $r$ back on the tape.

We write $\mathcal M(\mathcal I)$ to denote the output produced by this computation.
The computation proceeds in turns. Before the $i$-th oracle call, the input
string $x$, the background materials $\Kb$, and the prior transcript
$\Gamma_{i-1}=(q_1,r_1,\dots,q_{i-1},r_{i-1})$, with $\Gamma_0$ empty, are available to
$M$. Using any part of this information, together with its internal state and
random source, $M$ either halts and writes an output $y \in \Sigma^*$ or writes
the next query $q_i \in \Sigma^*$ on the query-response tape. In the latter
case, the oracle exchange produces $r_i \sim \Dist_{q_i}$, and the turn ends with
$\Gamma_i=(q_1,r_1,\dots,q_i,r_i)$.

\paragraph{Tasks.}
A \emph{task} is a tuple $T=(X,Y,S,\Dist_X,\Kb)$, where
$X\subseteq\Sigma^*$ is the input space, $Y\subseteq\Sigma^*$ is the output
space, $S:X\times Y\to[0,1]$ is the score function, $\Dist_X$ is an input
distribution on $X$, and $\Kb$ is fixed background information. The
background information $\Kb$ may be a document collection, domain resources, a
system prompt, or other materials placed on the background-input tapes of an
SOTM. The instance $x$ is the varying input to the SOTM, while $\Kb$ stays fixed
for the task. If no background material is supplied, $\Kb=\emptyset$.

\paragraph{Token count.}
A \emph{tokenizer} is a map $\tau$ that converts a string in $\Sigma^*$ into a
finite sequence of units called \emph{tokens}. Depending on the tokenizer, a
token may correspond to a word, a subword, punctuation, whitespace, a byte
sequence, or another atomic unit; common examples include subword tokenizers
such as byte-pair encoding and SentencePiece~\cite{Sennrich2016,Kudo2018}. A
\emph{token vocabulary} $\mathcal V$ is the set of tokens used by the
tokenizer. The \emph{token count} of a string $s$ is $\tok(s)=|\tau(s)|$.

For an SOTM $\mathcal{M}=(M,\Oracle)$ on input string $x$, with optional fixed
background information $\Kb$, let $q_i$ and $r_i$ be the query and response on
the $i$-th oracle call, and let $N(\mathcal I)$ be the possibly random number
of calls. The query and response token counts of turn $i$ are
$\tok_{\mathcal{M},Q,i}(\mathcal I)=|\tau(q_i)|$ and
$\tok_{\mathcal{M},R,i}(\mathcal I)=|\tau(r_i)|$.
The query token count, response token count, and total token count on
$\mathcal I$ are
\[
  \tok_{\mathcal{M},Q}(\mathcal I)
  =
  \sum_{i=1}^{N(\mathcal I)} \tok_{\mathcal{M},Q,i}(\mathcal I),
  \qquad
  \tok_{\mathcal{M},R}(\mathcal I)
  =
  \sum_{i=1}^{N(\mathcal I)} \tok_{\mathcal{M},R,i}(\mathcal I),
\]
and
$\tok_{\mathcal{M}}(\mathcal I)
=\tok_{\mathcal{M},Q}(\mathcal I)+\tok_{\mathcal{M},R}(\mathcal I)$.
Let $\tok_{\mathcal{M},Q}$, $\tok_{\mathcal{M},R}$, and
$\tok_{\mathcal{M}}$ without an input argument denote the expectations of these
quantities.

Throughout the paper, unless otherwise stated, expectations involving an SOTM
are taken over the analytical input distribution $x\sim\Dist_X$, the PTM's
random source, and the responses drawn from the oracle.
We write $\Pr[\cdot]$ for probability and use lower-case symbols such as
$p_j$, $\pi_j$, and $p_{\mathcal I}$ for probability parameters.

\paragraph{Token cost.}
Throughout the paper, token costs are computed with fixed unit query token cost
$\alpha>0$ and fixed unit response token cost $\beta>0$. The \emph{token cost}
of turn $i$ is $\TOK_{\mathcal{M},i}(\mathcal I;\alpha,\beta)
=\alpha\,\tok_{\mathcal{M},Q,i}(\mathcal I)
+\beta\,\tok_{\mathcal{M},R,i}(\mathcal I)$.
The query token cost, response token cost, and total token cost on
$\mathcal I$ are
$\TOK_{\mathcal{M},Q}(\mathcal I;\alpha,\beta)
=\alpha\,\tok_{\mathcal{M},Q}(\mathcal I)$,
$\TOK_{\mathcal{M},R}(\mathcal I;\alpha,\beta)
=\beta\,\tok_{\mathcal{M},R}(\mathcal I)$, and
\[
  \TOK_{\mathcal{M}}(\mathcal I;\alpha,\beta)
  =
  \TOK_{\mathcal{M},Q}(\mathcal I;\alpha,\beta)
  +
  \TOK_{\mathcal{M},R}(\mathcal I;\alpha,\beta)
  =
  \sum_{i=1}^{N(\mathcal I)}
  \TOK_{\mathcal{M},i}(\mathcal I;\alpha,\beta).
\]
Let $\TOK_{\mathcal{M},Q}(\alpha,\beta)$,
$\TOK_{\mathcal{M},R}(\alpha,\beta)$, and
$\TOK_{\mathcal{M}}(\alpha,\beta)$ without an input argument denote the
expectations of these quantities.
To keep notation readable, we may suppress $\alpha$ and $\beta$ from token-cost
notation when the dependence on these parameters is clear.
In many commercial LLM pricing schemes, response tokens are weighted above
query tokens ($\beta > \alpha$).

\paragraph{Token complexity.}
An SOTM $\mathcal M$ \emph{achieves quality} $\theta\in(0,1]$ on $T$ if
$\E[S(x,\mathcal M(\mathcal I))]\geq\theta$, where the expectation follows the
expectation convention.
The \emph{token complexity}
$\kappa_T(\theta; \Oracle, \alpha, \beta)$ is the infimum of
$\TOK_{\mathcal{M}}(\alpha,\beta)$ over all SOTMs
$\mathcal{M} = (M, \Oracle)$ with the same $\Oracle$ achieving quality
$\theta$ on $T$.

\section{Agentic Oracles}
\label{sec:model}

Extending the stochastic-oracle paradigm, an agentic oracle has at least one of
two additional capabilities: autonomous goal-directed control or environment
access. These capabilities need not appear together. An oracle may be agentic
because it autonomously decomposes a goal, generates intermediate queries,
evaluates partial results, revises its plan, and decides when to stop, even
without environment access. It may also be agentic because it can
access an environment containing task-relevant resources, even when its
internal control is otherwise simple. In many deployed systems the two capabilities
are combined: the oracle pursues a goal carried by the query while inspecting
or modifying files, codebases, databases, APIs, and tools, and accessing Web
resources or other global resources for information.

To the caller, both stationary stochastic and agentic oracles return a response,
and either may internally use several models and combine their outputs. What
distinguishes an agentic oracle is therefore not the use of multiple internal
models, but one or both of two capabilities: autonomous goal-directed control
and environment access. When
environment access is present, modeling the oracle requires an environment
with states, observations, available actions, a score function, and a transition
rule fixing how actions change the state. The environment can turn oracle use
into an adaptive process in which the oracle may inspect the current state,
take an action, verify the outcome, update the environment or its internal
state, and repeat.

This distinction is consistent with recent work on language-model agents that
interleave reasoning and action, invoke external tools, browse Web resources,
and interact with software or embodied environments
\cite{Yao2023ReAct,Schick2023Toolformer,Nakano2021WebGPT,Wang2024Voyager,Yang2024SWEAgent}.

\subsection{Environment Access and Internalized Queries}

We distinguish two kinds of environments: deterministic and stochastic. In a
\emph{deterministic} environment, each action on a given state always yields
the same next state: a local file system, a code sandbox, or a fixed database
behaves this way. An environment becomes \emph{stochastic} once it includes
resources whose responses the oracle cannot predict or reproduce---for example,
a live Web service whose read may return different results on identical calls
and whose write may time out or fail, a resource other processes modify
concurrently, or a physical actuator with noise.

This paper assumes deterministic environments. Intuitively, an environment
state records the current condition of the task environment, such as files,
database rows, available documents, cached Web resources, permissions, and
runtime status. An action is a command issued to the environment, such as
deleting a file, applying a patch, or running a build. The transition function
specifies how actions change states, and the observation function specifies
what can be observed, such as a directory listing or a test report.

\begin{definition}[Deterministic Environment]
\label{def:env}
A \emph{deterministic environment} is a six-tuple
\[
  \Env = (E, A, O, f, g, e_0)
\]
with state set $E$, action set $A$, observation set $O$, transition function
$f : E \times A \to E$, observation function $g : E \to O$, and initial state
$e_0 \in E$. The function $f$ is total:
an action $u$ inapplicable in state $e$---a command that cannot be executed,
such as committing with nothing staged---is treated as a no-op, $f(e, u) = e$,
leaving the state unchanged.
\end{definition}

We use \emph{observe} and \emph{read} for obtaining information from the
environment, and \emph{act} and \emph{write} for issuing a command that may
change the environment state.

Since $f$ and $g$ are deterministic, an action sequence
$u_1,u_2,\ldots$ determines the environment trajectory
$e_0,e_1,\ldots$ by $e_i=f(e_{i-1},u_i)$ for $i\ge 1$; the environment
introduces no randomness of its own. With deterministic environments in place,
we next define agentic oracles.

\begin{definition}[Agentic Oracle]
\label{def:agent}
An \emph{agentic oracle} $\Oracle_A$ has a hidden state space $H$ and an initial hidden state
$h_0\in H$. It may have access to a task environment. If no environment is
present, write $e=\bot_E$ for a dummy environment state.

Its internal operation may exercise autonomous goal-directed control, may
access an environment in the read-only or read--write sense described above, or
both.

On a query $q\in\Sigma^*$, hidden state $h\in H$, and environment state $e$,
the oracle's internal operation induces a distribution $\Dist_{q,e}^h$
over internal transcripts $\Transcript$: explicit subqueries when they
occur, tool calls, embedded model or platform operations, intermediate states,
evaluations, revisions, and the stopping point at which the operation
terminates. Each transcript determines a final response $r\in\Sigma^*$ returned
at the oracle interface and a next hidden state $h'\in H$. The induced
distribution of $(r,h')$ is denoted by $\Pi_{q,h,e}$. The transcript
distribution and the induced distribution $\Pi_{q,h,e}$ are internal to $\Oracle_A$;
the oracle interface outputs only $r$.
\end{definition}

An SOTM whose oracle is an agentic oracle is called an \emph{agentic SOTM}.
Agentic oracles extend stationary stochastic oracles in the following sense. If
$H$ consists of a single hidden state, no changing environment state
affects the oracle except through information encoded in the query, and the
internal mechanism and resources of $\Oracle_A$ do not change from call to call
except through the query, then the internal transcript distribution depends
only on $q$.
Projecting the internal transcript onto the final response gives a fixed
query-dependent response distribution. In this special case, the external
behavior of $\Oracle_A$ is that of a stationary stochastic oracle. When hidden state
persists across calls, or when the accessible environment state changes in ways
not encoded in the query, the same query may induce different response
distributions at different times. Thus an agentic oracle is generally
non-stationary from the viewpoint of the PTM that queries it.

The preceding discussion separates two issues. One issue is what
response-generating capacity is available to an oracle. The other is how that
capacity is organized internally. For later token-cost comparisons, we use a
stationary stochastic oracle as a baseline capacity and compare it with an
agentic oracle that has the same capacity but internalizes some queries that
would otherwise pass through a query-response interface.

\begin{definition}[Internalized-Query Agentic Oracle]
\label{def:hardened-agentic-oracle}
Fix a stationary stochastic oracle $\Oracle$, viewed as the baseline
response-generating capacity. An agentic oracle $\Oracle_A$ is
\emph{internalized-query relative to $\Oracle$} if, for internal queries
corresponding to calls that could be made to $\Oracle$, it uses the same
response-generating capacity and response distributions as $\Oracle$, while
some internal queries that would otherwise be represented as query-response
calls to $\Oracle$ are internalized into the internal mechanism of
$\Oracle_A$.
\end{definition}

Internalized queries are what distinguish $\Oracle_A$ from an SOTM that uses
$\Oracle$ through an explicit query-response interface. An SOTM with
$\Oracle$ as its oracle may submit the same internal queries explicitly, but
those queries must cross the interface. By contrast, $\Oracle_A$ may generate,
store, or process internal task items, queries, responses, or state updates
inside its own mechanism without submitting them through a separate
query-response interface.

\subsection{Environment Tasks}

When no environment is present, we use the tasks reviewed in
Section~\ref{sec:prelim}. When environment access is part of the
computation, success is determined by the final output of the SOTM and the
environment state reached through the updates induced during the computation.

\begin{definition}[Environment Task]
\label{def:env-task}
An \emph{environment task} is a six-tuple $\Task=(X,Y,S,\Dist_X,\Kb,\Env)$, where
$X\subseteq\Sigma^*$ is the input space, $Y\subseteq\Sigma^*$ is the output
space, $\Dist_X$ is an input distribution on $X$, $\Kb$ is fixed background
information, $\Env=(E,A,O,f,g,e_0)$ is a deterministic environment, and
$S:X\times Y\times E\to[0,1]$ is a score function. Since $\Kb$ is fixed for the
task, any dependence of the score on the background information is treated as
part of the fixed score function $S$.
\end{definition}

For an environment task $\Task=(X,Y,S,\Dist_X,\Kb,\Env)$, an SOTM $\SOTM$
interacting with $\Env$ induces an environment trajectory
$(e_0,e_1,\ldots)$ through the actions issued during the computation. Let
$\sigma$ be the halting time at which the computation is evaluated; for
example, $\sigma$ may be a predetermined number of environment steps or may be
determined by the computation. If $\SOTM$ produces final output
$\widehat y\in Y$ at time $\sigma$, then $\SOTM$ \emph{achieves quality}
$\theta\in(0,1]$ on $\Task$ if
$\E[S(x,\widehat y,e_\sigma)]\geq \theta$, where the expectation follows the
expectation convention.

We also use the following notion for a finite sequence of updates in an
environment: for an environment task $\Task=(X,Y,S,\Dist_X,\Kb,\Env)$, an
\emph{$n$-stage sequential environment operation} is an interaction segment
of length $n$ within $\Task$ consisting of states, observations or task items,
actions, and transitions $(e_0,o_1,u_1,e_1,\ldots,o_n,u_n,e_n)$, where at each
stage $t=1,\ldots,n$, the segment starts from the current environment state,
the observation or task item $o_t$ is determined by the environment, the action
$u_t\in A$ is applied, and the next state $e_t$ is determined by the
environment transition rule.

For an instance $x\in X$ and terminal output $y\in Y$, the operation has
terminal score $S(x,y,e_n)$ and succeeds if $S(x,y,e_n)=1$.

Environment updates in an agentic SOTM may be caused by the directing PTM, and,
when the oracle has read--write environment access, by internal actions issued
by the agentic oracle.

\begin{definition}[Agentic SOTM with Environment Access]
\label{def:asotm}
An \emph{agentic SOTM with environment access} is an agentic SOTM
$\SOTM_A=(M_A,\Oracle_A)$ used to solve an environment task. The access may be
read-only, in which case the SOTM or the oracle observes environment
information without changing the environment state, or read--write, in which
case actions may update the environment state. The PTM $M_A$ can
observe the environment through $g:E\to O$ and issue actions in $A$,
after which the environment updates according to $f:E\times A\to E$. If
$\Oracle_A$ has read--write environment access, its internal operation may also
inspect the environment and issue actions in $A$. The formal environment
trajectory records the cumulative state changes caused by both the PTM's
actions and the actions issued by $\Oracle_A$.

On full input context $\mathcal I$, at step $t = 1,2,\dots$, $M_A$
reads the current observation $o_t=g(e_t)$, forms a query $q_t$ using its
task-input tape, background-input tapes, internal state, prior transcript, and
random tape, and receives $r_t$ from $\Oracle_A$. Based on its current
configuration, including $o_t$ and $r_t$, $M_A$ may issue an action
$u_t\in A$. During the same call, if
$\Oracle_A$ has read--write environment access, its internal operation may also
issue environment actions. The environment trajectory is updated by applying the
actions issued by $M_A$ and by $\Oracle_A$ according to $f$ in their realized
order. The PTM's random tape provides local randomness at no token cost.
For a predetermined number of environment steps or a stopping rule, let
$\sigma(\mathcal I)$ be the halting time at which the final output and reached
environment state are evaluated; when $\mathcal I$ is fixed, we write $\sigma$.
\end{definition}

\subsection{Token Cost and Token Complexity}

With the agentic SOTM in place, we distinguish the token cost visible to its PTM
from the token cost incurred inside the agentic oracle. For an SOTM
$\SOTM_A=(M_A,\Oracle_A)$ whose oracle is agentic, the visible query-response
token cost reviewed in Section~\ref{sec:prelim} is referred to as the
\emph{orchestration token cost}. We write
$\TOK_{\SOTM_A,t}^{\mathrm{orch}}(\mathcal I;\alpha,\beta)$ for the turn-$t$
orchestration token cost and
$\TOK_{\SOTM_A}^{\mathrm{orch}}(\mathcal I;\alpha,\beta)$ for the corresponding
total orchestration token cost on full input context $\mathcal I$.

Before producing the final response, the agentic oracle may spend additional
tokens in its internal operations: model calls, tool calls, retrieval,
summaries, retries, evaluations, or embedded platform mechanisms. These tokens
are not visible on the query-response tape of the PTM, but they are real
execution-time token costs incurred by the agentic oracle. This motivates the
notion of agentic token cost.

For an agentic oracle $\Oracle_A$ and an internal transcript
$\Transcript$, let $\TOK_{\Oracle_A}(\Transcript;\alpha,\beta)$ denote the
\emph{agentic token cost} accumulated by the oracle's internal operations while
producing a response. Conditional on query $q$, current hidden state $h$, and
current environment state $e$, its expected agentic token cost is
$\E_{\Transcript\sim\Dist_{q,e}^h}[\TOK_{\Oracle_A}(\Transcript;\alpha,\beta)]$,
where $\Transcript$ is drawn from the internal transcript distribution
$\Dist_{q,e}^h$ of Definition~\ref{def:agent}. That is, the expectation is
over the oracle's internal randomness and the outputs of any models, tools, or
internal modules it invokes.

The directing PTM generally does not know how many internal steps the agentic
oracle will take, which tools or models it will call, whether it will retry,
revise, summarize, or stop early, or how much hidden state it will carry
forward. Thus the agentic token cost is generated by the realized internal
trajectory, not by the visible query and final response alone. In commercial
agentic systems, such as OpenAI Codex or Claude Code, the user may not know,
even after the oracle returns its final response, how this realized cost is
computed or decomposed across internal prompts, model calls, tool calls,
retries, summaries, or embedded platform mechanisms.

For the $t$-th call of $\SOTM_A=(M_A,\Oracle_A)$ on $\mathcal I$, let
\[
  \TOK_{\SOTM_A,t}^{\mathrm{ag}}(\mathcal I;\alpha,\beta)
  =
  \TOK_{\Oracle_A}(\Transcript_t(\mathcal I);\alpha,\beta)
\]
be the agentic token cost of its internal transcript. The total token
cost of the call is
\begin{equation}
\label{eq:per-call-token-decomposition}
  \TOK_{\SOTM_A,t}(\mathcal I;\alpha,\beta)
  =
  \TOK_{\SOTM_A,t}^{\mathrm{orch}}(\mathcal I;\alpha,\beta)
  +
  \TOK_{\SOTM_A,t}^{\mathrm{ag}}(\mathcal I;\alpha,\beta).
\end{equation}
When $\SOTM_A$ and $\mathcal I$ are fixed, we may write
$\TOK_t^{\mathrm{orch}}$, $\TOK_t^{\mathrm{ag}}$, and $\TOK_t$.

\begin{definition}[Agentic Token Complexity]
\label{def:complexity}
Fix an environment task $T=(X,Y,S,\Dist_X,\Kb,\Env)$, where $\Kb$ may be empty,
and an agentic oracle $\Oracle_A$. Let $\Fset_T(\theta;\Oracle_A)$ be the set of
agentic SOTMs
$\SOTM_A=(M_A,\Oracle_A)$ that achieve quality $\theta$ on $T$, that is,
$\E[S(x,\widehat y,e_{\sigma(\mathcal I)})]\geq
\theta$, where $\widehat y$ is the final output at halting time
$\sigma(\mathcal I)$, with $\Kb$ fixed and $\mathcal I$ following the
full-input-context convention. Define
\begin{equation}
\label{eq:expected-total-agentic-token-cost}
  \Phi(\SOTM_A;\alpha,\beta)
  =
  \E\!\left[
    \sum_{t=1}^{\sigma(\mathcal I)}
    \TOK_{\SOTM_A,t}(\mathcal I;\alpha,\beta)
  \right],
\end{equation}
\[
  \Phi^{\mathrm{orch}}(\SOTM_A;\alpha,\beta)
  =
  \E\!\left[
    \sum_{t=1}^{\sigma(\mathcal I)}
    \TOK_{\SOTM_A,t}^{\mathrm{orch}}(\mathcal I;\alpha,\beta)
  \right],
  \qquad
  \Phi^{\mathrm{ag}}(\SOTM_A;\alpha,\beta)
  =
  \E\!\left[
    \sum_{t=1}^{\sigma(\mathcal I)}
    \TOK_{\SOTM_A,t}^{\mathrm{ag}}(\mathcal I;\alpha,\beta)
  \right],
\]
where all expectations follow the convention above and include the oracle's
where all expectations follow the expectation convention and include the oracle's
internal randomness. The \emph{agentic token complexity} of $T$ relative to
$\Oracle_A$ is the infimum of expected total token cost over
$\SOTM_A\in\Fset_T(\theta;\Oracle_A)$:
\[
  \kappa_T(\theta;\Oracle_A,\alpha,\beta)
  =
  \inf_{\SOTM_A \in \Fset_T(\theta;\Oracle_A)}
  \Phi(\SOTM_A;\alpha,\beta),
\]
and the component complexities are
\[
  \kappa_T^c(\theta;\Oracle_A,\alpha,\beta)
  =
  \inf_{\SOTM_A \in \Fset_T(\theta;\Oracle_A)}
  \Phi^c(\SOTM_A;\alpha,\beta),
  \qquad c\in\{\mathrm{orch},\mathrm{ag}\}.
\]
We write $\korch(\theta;\Oracle_A,\alpha,\beta)$ and
$\kag(\theta;\Oracle_A,\alpha,\beta)$ for the cases $c=\mathrm{orch}$ and
$c=\mathrm{ag}$, respectively.
\end{definition}

The infima above need not be attained: realized token costs are integer-valued,
but expected token costs need not be, and an optimizing SOTM may not exist.

\subsection{Token-Cost Analysis}
\label{sec:decomp}

Token costs are separated into orchestration token cost, which is visible
to the PTM, and agentic token cost, which is incurred inside the oracle. For a
fixed agentic SOTM and a fixed input context, the per-call token cost decomposes
into these two components. Token complexity, however, is obtained after taking
expectations and optimizing over all agentic SOTMs that achieve a target quality
level. The question is how this optimization interacts with the two token-cost
components. The first result gives a general lower bound in terms of the
component complexities.

Throughout this subsection, fix the environment task $T$, the agentic oracle
$\Oracle_A$, and token-cost parameters $\alpha,\beta$. We suppress these
arguments and write $\Fset(\theta)$, $\kappa_T(\theta)$, $\korch(\theta)$, and
$\kag(\theta)$ for the feasible set, token complexity, orchestration component
complexity, and agentic component complexity of
Definition~\ref{def:complexity}. The superscripts $\mathrm{orch}$ and
$\mathrm{ag}$ serve as the component labels.

\begin{proposition}[Decomposition Inequality]
\label{prop:decomp}
The token complexity and its two component complexities satisfy
\[
  \kappa_T(\theta) ~\geq~ \korch(\theta) + \kag(\theta).
\]
Equality holds if the two component infima can be approached simultaneously
by feasible agentic SOTMs. In particular, equality holds if a common feasible
agentic SOTM attains both component infima.
\end{proposition}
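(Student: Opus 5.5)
The plan is to work directly from Definition~\ref{def:complexity}, using only linearity of expectation and elementary properties of infima over the common feasible set $\Fset(\theta)$.

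\emph{Step 1: decompose $\Phi$ for a single machine.} Fix any $\SOTM_A\in\Fset(\theta)$. Sum the per-call decomposition \eqref{eq:per-call-token-decomposition} over $t=1,\dots,\sigma(\mathcal I)$ and take expectations. This gives
\[
\Phi(\SOTM_A)=\Phi^{\mathrm{orch}}(\SOTM_A)+\Phi^{\mathrm{ag}}(\SOTM_A).
\]
Linearity is justified because both summands are nonnegative, since $\alpha,\beta>0$ and token counts are nonnegative. The random-horizon sum can therefore be split by Tonelli's theorem, and the identity holds in $[0,\infty]$ even when some expectations are infinite.

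\emph{Step 2: the inequality.} By definition of the component infima, $\Phi^{\mathrm{orch}}(\SOTM_A)\ge\korch(\theta)$ and $\Phi^{\mathrm{ag}}(\SOTM_A)\ge\kag(\theta)$ for every feasible $\SOTM_A$. Hence $\Phi(\SOTM_A)\ge\korch(\theta)+\kag(\theta)$ for every feasible $\SOTM_A$. Taking the infimum over $\Fset(\theta)$ yields $\kappa_T(\theta)\ge\korch(\theta)+\kag(\theta)$.

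\emph{Step 3: equality.} Interpret ``approached simultaneously'' as the existence of a sequence $\SOTM_A^{(k)}\in\Fset(\theta)$ with
\[
\Phi^{\mathrm{orch}}(\SOTM_A^{(k)})\to\korch(\theta)
\quad\text{and}\quad
\Phi^{\mathrm{ag}}(\SOTM_A^{(k)})\to\kag(\theta).
\]
Then Step 1 gives $\kappa_T(\theta)\le\Phi(\SOTM_A^{(k)})\to\korch(\theta)+\kag(\theta)$, which combined with Step 2 gives equality. If a single feasible $\SOTM_A$ attains both infima, take the constant sequence.

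The conventions need care in the edge cases. If $\Fset(\theta)=\emptyset$, all three infima equal $+\infty$ and the inequality holds trivially. Infinite values are handled by working in $[0,\infty]$, where no $\infty-\infty$ ambiguity arises because every quantity is nonnegative.

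I expect no real obstacle. The only delicate step is Step 1, which needs the interchange of expectation with a random-length sum. Nonnegativity of every term settles this through Tonelli's theorem, or monotone convergence, without any integrability assumption on $\sigma(\mathcal I)$.
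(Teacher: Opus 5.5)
Your proposal is correct and follows the paper's proof essentially step for step: you split $\Phi$ into $\Phi^{\mathrm{orch}}+\Phi^{\mathrm{ag}}$ by additivity of expectation, bound each term below by its infimum over the common feasible set $\Fset(\theta)$, take the infimum, and get equality from feasible SOTMs that approach both component infima simultaneously. Your extra care with the random-horizon interchange (nonnegativity plus Tonelli), the empty feasible set, and infinite values tightens details the paper leaves implicit; it does not change the route.
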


\begin{proof}
It follows from Equation~\eqref{eq:per-call-token-decomposition} and additivity
of expectation that, for every
$\SOTM_A \in \Fset(\theta)$,
$\Phi(\SOTM_A) = \Phi^{\mathrm{orch}}(\SOTM_A) +
\Phi^{\mathrm{ag}}(\SOTM_A) \geq \korch(\theta) + \kag(\theta)$, each term
being at least its infimum over the same feasible set.
Taking the infimum of the left side preserves the bound. Equality holds if
feasible agentic SOTMs can make
$\Phi^{\mathrm{orch}}$ approach $\korch(\theta)$ and $\Phi^{\mathrm{ag}}$
approach $\kag(\theta)$ simultaneously. A common minimizer gives the special
case in which both infima are attained.
\end{proof}

The component complexities do not necessarily sum to token complexity, because
the SOTM that minimizes orchestration token cost need not be the same SOTM that
minimizes agentic token cost. A strict gap can occur when sequences approaching
the two component infima are incompatible, so no feasible agentic SOTM can
approach both optima at once. The next result gives a simple case where one
component nevertheless controls token complexity up to a constant factor.

For any given agentic SOTM $\SOTM_A=(M_A,\Oracle_A)$ and any turn $t$ of its
computation, let $\mathcal F_t$ denote the information available after $M_A$ has
issued query $q_t$ but before $\Oracle_A$'s internal operation at that
turn is realized. Thus $\mathcal F_t$ consists of the task instance $x$, the
background materials $\Kb$, if present, the prior query-response transcript
$(q_1,r_1,\ldots,q_{t-1},r_{t-1})$, the state of $M_A$, the randomness revealed
up to turn $t$, and the issued query $q_t$. It does not include the internal
transcript or final response of $\Oracle_A$ at turn $t$.

\begin{proposition}[Dominant Component Bound]
\label{prop:dominant-component}
Let $\lambda_1>0$ and $\lambda_2\geq 0$.
The following hold.
\begin{enumerate}
  \item If, for every realized turn $t\leq\sigma$,
  $\E[\TOK_t^{\mathrm{ag}}\mid\mathcal F_t]\geq
  \lambda_1\E[\TOK_t^{\mathrm{orch}}\mid\mathcal F_t]$, then
  $\kag(\theta) \leq \kappa_T(\theta) \leq
  (1 + 1/\lambda_1)\kag(\theta)$.

  \item If, for every realized turn $t\leq\sigma$,
  $\E[\TOK_t^{\mathrm{ag}}\mid\mathcal F_t]\leq
  \lambda_2\E[\TOK_t^{\mathrm{orch}}\mid\mathcal F_t]$, then
  $\korch(\theta) \leq \kappa_T(\theta) \leq
  (1+\lambda_2)\korch(\theta)$.
\end{enumerate}
\end{proposition}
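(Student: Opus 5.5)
The plan is to transfer the per-turn conditional comparison to the aggregated expectations $\Phi^{\mathrm{ag}}$ and $\Phi^{\mathrm{orch}}$ for each feasible $\SOTM_A$, and then take infima. The lower bounds in both parts are immediate. Every term is nonnegative, so $\Phi(\SOTM_A)=\Phi^{\mathrm{orch}}(\SOTM_A)+\Phi^{\mathrm{ag}}(\SOTM_A)\ge\Phi^{c}(\SOTM_A)$ for $c\in\{\mathrm{orch},\mathrm{ag}\}$. Taking the infimum over $\Fset(\theta)$ gives $\kappa_T(\theta)\ge\kag(\theta)$ and $\kappa_T(\theta)\ge\korch(\theta)$. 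Alternatively, Proposition~\ref{prop:decomp} together with nonnegativity of the other component complexity gives the same bounds.

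The key step is to show, for a fixed feasible $\SOTM_A$, that
\[
\Phi^{\mathrm{ag}}(\SOTM_A)\ge\lambda_1\Phi^{\mathrm{orch}}(\SOTM_A)
\]
under the hypothesis of part~1, and correspondingly $\Phi^{\mathrm{ag}}\le\lambda_2\Phi^{\mathrm{orch}}$ under part~2.

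I would write each sum as $\sum_{t\ge1}\mathbf 1\{t\le\sigma\}\TOK_t^{c}$. By Tonelli, which applies because all terms are nonnegative, $\Phi^{c}=\sum_t\E[\mathbf 1\{t\le\sigma\}\TOK_t^c]$. The event $\{t\le\sigma\}$, meaning that turn $t$ is realized and the query $q_t$ is issued, is determined by the information before $\Oracle_A$'s turn-$t$ operation, so it is $\mathcal F_t$-measurable. The tower property then gives
\[
\E[\mathbf 1\{t\le\sigma\}\TOK_t^c]=\E\bigl[\mathbf 1\{t\le\sigma\}\E[\TOK_t^c\mid\mathcal F_t]\bigr].
\]
Applying the pointwise conditional inequality on $\{t\le\sigma\}$ and summing over $t$ yields the aggregated inequality.

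Then part~1 follows. We have $\Phi^{\mathrm{orch}}\le\Phi^{\mathrm{ag}}/\lambda_1$, so $\Phi(\SOTM_A)\le(1+1/\lambda_1)\Phi^{\mathrm{ag}}(\SOTM_A)$ for every feasible $\SOTM_A$. Hence $\kappa_T(\theta)\le\Phi(\SOTM_A)\le(1+1/\lambda_1)\Phi^{\mathrm{ag}}(\SOTM_A)$, and taking the infimum on the right gives $\kappa_T(\theta)\le(1+1/\lambda_1)\kag(\theta)$. Part~2 is symmetric: $\Phi\le(1+\lambda_2)\Phi^{\mathrm{orch}}$, and we take the infimum. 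If some $\Phi^c$ is infinite, the inequalities hold trivially. If $\Fset(\theta)$ is empty, all infima equal $+\infty$ and the statement is vacuous.

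The main obstacle is the measurability point: we must justify that $\{t\le\sigma\}$ belongs to $\mathcal F_t$, so that the conditional hypothesis can be applied inside the expectation. The halting time $\sigma$ could, in principle, depend on the turn-$t$ response. My approach is to read "realized turn $t\le\sigma$" as the event that $M_A$ issues $q_t$. Whether this happens is decided by $M_A$ from the prior transcript, its state, and its randomness, all of which lie in $\mathcal F_t$ by the stated definition, so $\sigma$ acts as a stopping time for this filtration. If this reading fails, a fallback is to interpret the hypothesis as holding on the realized turns, with the indicator absorbed into the conditioning. A secondary technical point is the use of Tonelli/monotone convergence, which handles a possibly unbounded $\sigma$.
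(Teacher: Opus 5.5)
Your proposal is correct and follows the same route as the paper. You pass the per-turn conditional inequality to $\Phi^{\mathrm{ag}}(\SOTM_A)\ge\lambda_1\Phi^{\mathrm{orch}}(\SOTM_A)$ (resp.\ $\le\lambda_2\Phi^{\mathrm{orch}}(\SOTM_A)$) via the tower property, sandwich $\Phi(\SOTM_A)$ between the dominant component and $(1+1/\lambda_1)$ (resp.\ $(1+\lambda_2)$) times it for every feasible $\SOTM_A$, and take infima over $\Fset(\theta)$; your explicit treatment of $\{t\le\sigma\}\in\mathcal F_t$ and of the sum--expectation interchange fills in what the paper compresses into ``applying the conditional bound at each realized turn $t\le\sigma$ and using the tower property.''
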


The first bound describes applications in which a compact query delegates
substantial internal work, so agentic token cost dominates orchestration token
cost. The second bound is mainly a consistency check. When $\lambda_2=0$,
the agentic token cost vanishes and $\kappa_T(\theta)=\korch(\theta)$,
recovering the stationary-oracle SOTM framework.

\begin{proof}
For item 1, applying the conditional bound at each realized turn $t\leq\sigma$
and using the tower property gives $\Phi^{\mathrm{ag}}(\SOTM_A) \geq
\lambda_1\,\Phi^{\mathrm{orch}}(\SOTM_A)$ for every feasible $\SOTM_A$. By the
definitions of $\Phi$, $\Phi^{\mathrm{orch}}$, and $\Phi^{\mathrm{ag}}$ in
Definition~\ref{def:complexity},
\[
  \Phi(\SOTM_A)
  =
  \Phi^{\mathrm{orch}}(\SOTM_A)+\Phi^{\mathrm{ag}}(\SOTM_A).
\]
The assumption gives
$\Phi^{\mathrm{orch}}(\SOTM_A)\leq
\Phi^{\mathrm{ag}}(\SOTM_A)/\lambda_1$, hence
$\Phi(\SOTM_A)\leq(1+1/\lambda_1)\Phi^{\mathrm{ag}}(\SOTM_A)$. Also, since
orchestration token cost is nonnegative,
$\Phi(\SOTM_A)\geq\Phi^{\mathrm{ag}}(\SOTM_A)$. Taking infima over
$\Fset(\theta)$ gives the first bound. Item 2 is symmetric with $\lambda_2$
in place of $1/\lambda_1$; $\lambda_2 = 0$ forces
$\Phi^{\mathrm{ag}} \equiv 0$, so $\Phi = \Phi^{\mathrm{orch}}$ and the
complexities coincide. The bound does not require a common minimizer, so it
holds even where Proposition~\ref{prop:decomp} is a strict inequality.
\end{proof}

Expected agentic token cost may depend on internal dispatch choices that are
not exposed to the PTM. Multi-model agentic systems may use member models either
as alternatives for the same internal query or as complementary specialists for
different subtasks. The following theorem addresses the first case and gives an
ordering rule in terms of the cost-success ratio $\TOK_i/p_i$, namely, mean
agentic token cost divided by success probability. Consider an agentic oracle
that can dispatch a query internally to member models $1,\ldots,m$, trying them
sequentially until one produces a usable response. In that setting, dispatch
order affects expected agentic token cost without changing the success
probability. The member outcomes are independent. Member $i$
produces a usable response with probability
$p_i\in[0,1]$, and invoking it has mean agentic token cost $\TOK_i$. An internal
verifier accepts exactly usable responses and, when at least one usable response
is available, selects one as the oracle response. Parallel dispatch to all
members and verified sequential dispatch over the same members have the same
success probability
\[
  p_{\mathrm{agg}} = 1-\prod_{i=1}^{m}(1-p_i).
\]
Thus the agentic token cost of sequential dispatch is not determined only by
the set of member models; it also depends on the order in which they are
queried.

\begin{theorem}[Ordering Rule for Verified Sequential Dispatch]
\label{thm:dispatch-ordering}
For an ordering $\pi$ of the member models, where $\pi(k)$ is the member queried
in the $k$-th position, the expected member-call contribution to agentic token
cost is
\begin{equation}
\label{eq:verified-sequential-dispatch-cost}
  \sum_{k=1}^{m} \TOK_{\pi(k)}
  \prod_{\ell<k}\bigl(1-p_{\pi(\ell)}\bigr),
\end{equation}
excluding fixed dispatch, verification, and aggregation overheads. For a fixed
set of members with $p_i>0$, this expected cost is minimized by ordering
members by nondecreasing ratio $\TOK_i/p_i$.
\end{theorem}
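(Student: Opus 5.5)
The proof has two parts: derive the cost formula \eqref{eq:verified-sequential-dispatch-cost}, then show that sorting by $\TOK_i/p_i$ minimizes it by an adjacent-interchange argument.

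\emph{Cost formula.} Under ordering $\pi$, the member in position $k$ is invoked exactly when the members in positions $1,\ldots,k-1$ all fail. The verifier accepts exactly usable responses, so dispatch stops at the first usable one. By independence of member outcomes, position $k$ is reached with probability $\prod_{\ell<k}(1-p_{\pi(\ell)})$.

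The cost incurred at a member is independent of whether that member is reached, since reaching it depends only on earlier members. Therefore the expected cost contributed by position $k$ is $\TOK_{\pi(k)}$ times the reach probability. I will make this precise by writing the total member-call cost as $\sum_k C_{\pi(k)}\mathbf 1[\text{position $k$ reached}]$, where $C_i$ is the realized cost of member $i$ with mean $\TOK_i$, and applying linearity of expectation. This uses that each member's cost is independent of the earlier members' outcomes, which I take to be part of the independence assumption. Fixed dispatch, verification and aggregation overheads are excluded by hypothesis.

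\emph{Optimality.} Fix an ordering and compare it with the ordering obtained by swapping adjacent positions $k$ and $k+1$, occupied by members $i$ and $j$. Let $P=\prod_{\ell<k}(1-p_{\pi(\ell)})$. Terms before position $k$ are identical in the two orderings. Terms after position $k+1$ are also identical, because their reach probability contains the factor $(1-p_i)(1-p_j)$, which is symmetric in $i$ and $j$.

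Only two terms change. They equal $P\,[\TOK_i+(1-p_i)\TOK_j]$ before the swap and $P\,[\TOK_j+(1-p_j)\TOK_i]$ after it. The difference is $P\,(p_j\TOK_i-p_i\TOK_j)$. Using $p_i,p_j>0$, the original order is no worse exactly when $\TOK_i/p_i\le \TOK_j/p_j$.

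To conclude, take any ordering. Repeatedly swap adjacent pairs that violate nondecreasing ratio order, as in bubble sort. Each swap does not increase the cost, and the process terminates at a nondecreasing-ratio ordering. Any two nondecreasing-ratio orderings differ only by permuting members with equal ratios. Swaps between such members leave the cost unchanged, so all nondecreasing-ratio orderings have the same cost. Hence every nondecreasing-ratio ordering attains the minimum over the finitely many orderings.

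The main care point is the exchange step, specifically checking that the suffix terms are genuinely unaffected by the swap. The hypothesis $p_i>0$ is needed to turn the sign condition $p_j\TOK_i\le p_i\TOK_j$ into a ratio comparison. When $P=0$, the difference vanishes and the inequality holds trivially, so no special case is required there.
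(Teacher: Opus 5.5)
Your proposal is correct and follows essentially the same route as the paper. Both derive the cost formula from reach probabilities and linearity of expectation, then show optimality by an adjacent-interchange comparison of $\TOK_i+(1-p_i)\TOK_j$ against $\TOK_j+(1-p_j)\TOK_i$. Your extra checks fill in details that the paper's one-line ``repeated adjacent interchanges'' step leaves implicit: the common prefix factor $P$, the invariance of the suffix terms, termination of the bubble-sort argument, and equal cost across all tie-broken sorted orders.
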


\begin{proof}
Both parallel and sequential dispatch fail exactly when every member fails to
produce a usable response, which gives $p_{\mathrm{agg}}$.
In a sequential order $\pi$, member $\pi(k)$ is invoked only if all earlier
members failed verification, an event of probability
$\prod_{\ell<k}(1-p_{\pi(\ell)})$, so linearity of expectation gives the
cost formula \eqref{eq:verified-sequential-dispatch-cost}. For the ordering
rule, compare two adjacent members $i$ and $j$.
Placing $i$ before $j$ costs $\TOK_i+(1-p_i)\TOK_j$ in expectation, while
placing $j$ before $i$ costs $\TOK_j+(1-p_j)\TOK_i$. The first order is no
worse exactly when $p_i\TOK_j \geq p_j\TOK_i$, equivalently
$\TOK_i/p_i \leq \TOK_j/p_j$. Repeated adjacent interchanges yield the stated
order.
\end{proof}

\paragraph{Agentic token costs are not externally estimable.}
For both stationary stochastic and agentic oracles, the internal dispatch
mechanism and member models are not exposed to the PTM that issues queries to
the oracle.
For agentic oracles, this opacity has a stronger consequence: during the
computation, the PTM generally has no way to estimate the token cost of the
oracle's internal operations from the query-response record visible to the PTM.
Those operations may include adaptive routing, retries, tool calls, model
invocations, summaries, verification steps, or discarded intermediate work. In
deployed agentic systems, even ex post information may be unavailable: billing
may report only an aggregate charge, partial usage, or no usable breakdown of
how tokens were spent across internal prompts, model calls, tool calls, retries,
summaries, verification steps, and other intermediate work. After the PTM
issues the query, it may have little control over the internal execution path
of the agentic oracle. From the caller's perspective, the agentic oracle can
therefore behave as an opaque token-cost sink. This is a drawback of computing
with agentic oracles: meaningful control of agentic token cost requires billing
transparency, usage telemetry, or an interface that lets the PTM influence
internal choices such as which member models are queried and in what order.

\subsection{Stationary-Oracle Simulation under Transparency}
\label{sec:transparent-simulation}

Under a strong transparency assumption, an agentic SOTM can be simulated by a
stationary-oracle SOTM. This result is a calibration rather than an advantage
claim: it identifies a setting in which the agentic oracle's implementation is
available, so the simulating SOTM can reproduce its behavior using the
stationary stochastic oracle.

We say that an agentic oracle $\Oracle_A$ is \emph{transparent over} a stationary
stochastic oracle $\Oracle$ if $\Oracle_A$ admits an SOTM implementation
$\SOTM'=(M',\Oracle)$ and the full description of $\SOTM'$ is available,
including its transition function, initial configuration, the form of queries
made by $M'$ to $\Oracle$, state representation, and output rule. The following
proposition makes the resulting simulation and its token-cost comparison
explicit.

\begin{proposition}[Agentic-SOTM Simulation under Transparency]
\label{prop:agentic-simulation}
Let $\SOTM_A=(M_A,\Oracle_A)$ be an agentic SOTM operating in a deterministic
environment. Suppose that $\Oracle_A$ is transparent over a stationary
stochastic oracle $\Oracle$. Then one can construct an SOTM
$\SOTM_S=(M_S,\Oracle)$ that induces the same distribution over environment
trajectories as $\SOTM_A$. Consequently, $\SOTM_S$ and $\SOTM_A$ achieve the
same quality on the corresponding environment task. Moreover, the simulation
satisfies
\begin{equation}
\label{eq:transparent-simulation-token-cost}
  \Phi(\SOTM_S)
  \leq
  \Phi^{\mathrm{orch}}(\SOTM_A)
  +
  \Phi^{\mathrm{ag}}(\SOTM_A).
\end{equation}
\end{proposition}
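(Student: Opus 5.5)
The plan is a direct inline simulation. $M_S$ runs $M_A$ step by step, and it runs the transparent implementation $\SOTM'=(M',\Oracle)$ as a subroutine wherever $M_A$ would call $\Oracle_A$. Concretely, $M_S$ keeps three things on its work tapes: the configuration of $M_A$ (input, background, transcript so far), a representation of the hidden state $h$ of $\Oracle_A$ (via the state representation of $M'$ supplied by transparency), and its own access to the environment.

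When $M_A$ writes a query $q_t$, $M_S$ does not send it to any oracle. Instead it writes $q_t$, together with the stored hidden state, onto the simulated input of $M'$ and runs $M'$. Each query $M'$ would issue to $\Oracle$ is written by $M_S$ on its own query-response tape, and the response is fed back to $M'$. Whenever $\Oracle_A$'s internal operation would observe the environment or issue an action in $A$, $M'$ does so, and $M_S$ performs that observation or action on the real environment. When $M'$ halts, its output rule gives $(r_t,h')$. $M_S$ places $r_t$ where $M_A$ expects its response, updates the stored hidden state to $h'$, and resumes $M_A$, including any action $u_t$ that $M_A$ then issues.

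For the distributional claim, I will argue by induction on $t$. The joint law of (configuration of $M_A$, hidden state, environment state, action sequence so far) is the same under $\SOTM_S$ and $\SOTM_A$. The base case holds because the two start from identical initial configurations, $h_0$ and $e_0$. For the inductive step, I condition on a common history. Then:
\begin{itemize}
  \item $\SOTM'$ realizes exactly the transcript distribution $\Dist^h_{q,e}$, because $\Oracle$ is stationary and $M'$ is the given implementation.
  \item The environment is deterministic (Definition~\ref{def:env}), so identical action sequences applied in the same realized order give identical states.
  \item $M_S$ uses its random tape to reproduce both $M_A$'s randomness and $M'$'s, at no token cost.
\end{itemize}
Hence the turn-$(t+1)$ laws agree. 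This gives equal distributions of the full trajectory and of $(\widehat y,e_\sigma)$ at the halting time, and therefore equal quality.

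For the token bound \eqref{eq:transparent-simulation-token-cost}, every token $\SOTM_S$ pays is a query or response on its interface with $\Oracle$, and each such exchange is one of $M'$'s calls. I intend to read the agentic cost $\TOK_{\Oracle_A}(\Transcript_t)$ under the implementation $\SOTM'$ as exactly the token cost of those internal calls. Then the realized cost of $\SOTM_S$ at simulated turn $t$ equals $\TOK^{\mathrm{ag}}_t$, and summing and taking expectations gives $\Phi(\SOTM_S)=\Phi^{\mathrm{ag}}(\SOTM_A)\le\Phi^{\mathrm{orch}}(\SOTM_A)+\Phi^{\mathrm{ag}}(\SOTM_A)$. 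The orchestration cost disappears because $q_t$ and $r_t$ now live on work tapes. If $M'$'s own queries must carry $q_t$ verbatim, that cost is already counted inside the internal transcript. In either reading the sum on the right is an upper bound.

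The main obstacle is this identification of agentic token cost with the query-response cost of $\SOTM'$. Definition~\ref{def:agent} leaves $\TOK_{\Oracle_A}$ abstract, so I will state explicitly the convention that the agentic cost of a transparent oracle is at least the $(\alpha,\beta)$-cost of the implementing SOTM's calls. A secondary point is that $\sigma$ must index the same turns in both machines, so that the sums in Definition~\ref{def:complexity} match. I handle this by letting $M_S$'s turn counter track $M_A$'s turns rather than the raw calls to $\Oracle$.
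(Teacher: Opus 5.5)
Your proposal is correct and takes essentially the same route as the paper. In both, $M_S$ simulates $M_A$ and runs the transparent implementation $\SOTM'$ wherever $M_A$ would call $\Oracle_A$, forwarding only $M'$'s calls to $\Oracle$; an induction over turns gives equal trajectory distributions; and the orchestration turns become cost-free internal steps while the forwarded calls are counted in $\Phi^{\mathrm{ag}}(\SOTM_A)$, which the paper also takes \emph{by assumption}. Your explicit handling of the hidden-state representation, the joint law of $(\widehat y,e_\sigma)$, and the cost convention makes precise points that the paper's proof leaves implicit.
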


\begin{proof}
Because the full description of $\SOTM'=(M',\Oracle)$ is available, the
simulating PTM $M_S$ simulates $\SOTM'$ within its own computation. Whenever
the simulated $M_A$ queries $\Oracle_A$, $M_S$ runs $M'$ with the same query,
generates matching oracle-call outcomes and random choices for the simulated
execution of $\SOTM'$, and returns the generated response to the simulated
$M_A$. Induction over turns gives the same distribution over actions and
environment states, because the next configuration and environment state are
determined by the same simulated response and action history. Hence the two
SOTMs induce the same distribution over environment trajectories and achieve the
same quality.

For the token-cost inequality \eqref{eq:transparent-simulation-token-cost}, the
orchestration query-response turns between $M_A$ and $\Oracle_A$ are included in
the right-hand side through
$\Phi^{\mathrm{orch}}(\SOTM_A)$, but the simulator need not reproduce those
turns as oracle calls. Thus the inequality need not be an equality: those turns
may be handled as internal simulation steps of $M_S$, which carry no token cost
in the present execution-time token-cost model. Each oracle call made by $M'$ is
simulated by one query-response turn
between $M_S$ and $\Oracle$. By assumption, the tokens in those internal turns
are counted in the agentic token cost $\Phi^{\mathrm{ag}}(\SOTM_A)$.
Simulating the remaining PTM transitions incurs no token cost. Taking
expectations gives inequality~\eqref{eq:transparent-simulation-token-cost}.
\end{proof}

The transparency assumption is strong and often unrealistic for deployed
agentic systems. An agentic oracle may rely on domain-specific models, private
tools, proprietary retrieval systems, private memory stores, embedded model or
platform mechanisms, or other internal resources that cannot be reached through
the oracle access available to a stationary-oracle SOTM.

\section{Advantages of Agentic SOTMs}
\label{sec:advantages}

Despite the transparency-based simulation result in Section~\ref{sec:transparent-simulation},
agentic SOTMs remain attractive even though agentic token costs are not
externally estimable. Agentic SOTMs allow goal-directed work to be delegated to
an agentic oracle without requiring the PTM designer to specify every
intermediate step needed to achieve a solution. This can reduce coding effort,
let the designer focus on the crucial steps assigned to the PTM, improve
productivity, reduce labor cost, and shorten development time.

The question in this section is whether agentic SOTMs can also reduce
execution-time token cost relative to SOTMs with stationary stochastic oracles
for the same task and at the same quality level. The results below show that,
agentic SOTMs can do so by internalizing state, environment updates, or
intermediate operations that a stationary-oracle SOTM must repeatedly transmit
through the query-response interface.

\subsection{Token-Cost Advantages with Environment Updates}

One source of advantage is retained state during environment updates.
An agentic oracle may follow an internal task rule that uses the accumulated
environment-update history to choose intermediate actions and, eventually,
the final response. This rule and the state accumulated through the resulting
actions are not exposed at the query-response interface.

An SOTM using a stationary stochastic oracle can, in principle, mimic the effect
of such a rule by keeping a record of the prior computation on its own tapes and
retransmitting the relevant part of that record in later queries. The token-cost
gap arises because this record must repeatedly cross the query-response
interface for the stationary oracle, while the agentic oracle may retain the
corresponding state internally. Consider the following environment task, in
which the environment is updated through a sequence of actions.

\paragraph{The retained-history environment task.}

Let $\mathcal V$ be the token vocabulary for the underlying tokenizer. Fix an
integer $K\geq 2$ and the action set $A=\{u_1,\ldots,u_K\}$. The input space $X$
consists of tuples $x=(m,\mathbf n,\mathbf B)$, where $m$ is a positive
integer, $\mathbf n=(n_1,\ldots,n_m)$ is a vector of positive integers, and
$\mathbf B=(b_{i,t})_{1\leq i\leq m,\ 1\leq t\leq n_i}$ is an array of positive
integers. Here $n_i$ is the number of state updates in phase $i$, and
$b_{i,t}$ is the token length of the observed token string at update $t$ of
phase $i$. Under $\Dist_X$, first draw $m$, then draw $n_1,\ldots,n_m$, and
then draw each $b_{i,t}$, all independently according to
$\Pr[N=k]=6/(\pi^2 k^2)$, $k=1,2,\ldots$.

The \emph{retained-history environment task} is the six-tuple
\[
  T_{\mathrm{RH}}^{\mathrm{env}}=(X,Y,S,\Dist_X,\Kb,\Env).
\]
The background information $\Kb$ is arbitrary but fixed; it may be empty. Since
$\Kb$ is fixed across the computation, it does not affect the repeated-prefix
token-cost comparison below. The output space $Y$ consists of terminal
certificates of completion or failure. On input $x=(m,\mathbf n,\mathbf B)$,
the deterministic environment $\Env=(E,A,O,f,g,e_0)$ is organized into $m$
phases. Phase $i$ consists of $n_i$ state updates.

At update $t=1,\ldots,n_i$ of phase $i$, the observation available from the
environment includes the next observed token string
$z_{i,t}\in\mathcal V^{b_{i,t}}$ of exactly $b_{i,t}$ tokens. The prefix
available by update $t$ of phase $i$ is
$(z_{i,1},\ldots,z_{i,t})$. For each prefix, the environment has a unique
correct action in $A$ for that update. The transition function $f$ moves the
environment to a failure state after the first incorrect action and to a goal
state after all phases are completed correctly. The score function
$S:X\times Y\times E\to[0,1]$ assigns value $1$ exactly when the final output
and reached environment state certify completion of all phases, and assigns
value $0$ otherwise. This completes the specification of the retained-history
environment task.

For the token-cost comparison, we choose the correct-action functions randomly
when constructing the deterministic environment. For each input $x$, write
the correct action at update $t$ of phase $i$ as
$u_{i,t}=F^x_{i,t}(z_{i,1},\ldots,z_{i,t})$, where
$F^x_{i,t}:\mathcal V^{b_{i,1}}\times\cdots\times\mathcal V^{b_{i,t}}\to A$,
and choose the functions $F^x_{i,t}$ independently and uniformly from all such
functions. These functions determine the relevant part of the environment
transition function, but they are not placed on the PTM's input or
background tapes. We want to know what token cost is incurred when an agentic
oracle can retain each phase history internally, while a stationary-oracle SOTM
must transmit the relevant prefix through the query-response interface whenever
it queries the stationary oracle.

Because the action-selection functions are uniform over all prefix functions, a
representation that does not determine the full prefix gives no better than
$1/K$ success probability for the next action.

An agentic oracle $\Oracle_A$ is \emph{retained-history capable} for
$T_{\mathrm{RH}}^{\mathrm{env}}$ relative to a stationary stochastic oracle
$\Oracle$ if $\Oracle_A$ can interact with the task environment, retains each
phase history internally, and is internalized-query relative to $\Oracle$, where
$\Oracle$ provides the response-generating capacity used to identify the correct
actions for the required environment updates.

\begin{theorem}[Token Costs with Retained Environment History]
\label{thm:persistent-state-separation}
Suppose $\Oracle_A$ is retained-history capable for
$T_{\mathrm{RH}}^{\mathrm{env}}$ relative to a stationary stochastic oracle
$\Oracle$. Then we can construct an agentic SOTM $\SOTM_A=(M_A,\Oracle_A)$ such
that $\SOTM_A$ achieves quality $1$ on $T_{\mathrm{RH}}^{\mathrm{env}}$,
meaning that its expected score on the environment task is $1$.
For each phase $i$, let $q_i$ and $r_i$ be the visible query and final response
on $M_A$'s query-response tape for the delegated call corresponding to phase
$i$, and let $\ell_{i,t}$ denote the token length of the common intermediate
action representation used at update $t$ of that phase. Then the
delegated call at turn $i$ has expected total token cost, including
orchestration and agentic token costs, at most
\[
  \alpha|\tau(q_i)|+\beta|\tau(r_i)|
  +
  \alpha\sum_{t=1}^{n_i}b_{i,t}
  +
  \beta\sum_{t=1}^{n_i}\ell_{i,t}.
\]
Conversely, for any $\theta$ with $1/K<\theta\leq 1$, let $\SOTM$ be any SOTM
using $\Oracle$ whose PTM has no access to the phase action-selection functions
$F_i^x=(F^x_{i,1},\ldots,F^x_{i,n_i})$ except through queries to $\Oracle$, and
that solves the same environment task with quality at least $\theta$. Then its
expected intermediate query-response token cost for phase $i$, using $\Oracle$,
is at least
\[
  \frac{\theta-1/K}{1-1/K}
  \left(
    \alpha\sum_{t=1}^{n_i}\sum_{s=1}^{t} b_{i,s}
    +
    \beta\sum_{t=1}^{n_i}\ell_{i,t}
  \right).
\]
\end{theorem}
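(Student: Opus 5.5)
The proof has two parts: an explicit construction for the upper bound and a per-update counting argument for the converse.

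\emph{Upper bound.} Let $M_A$ issue one delegated call per phase. Query $q_i$ instructs $\Oracle_A$ to carry out phase $i$ in the environment, and response $r_i$ is a completion certificate. After the last phase, $M_A$ writes the final certificate. Inside phase $i$, $\Oracle_A$ proceeds as follows at update $t$:
\begin{itemize}
\item it reads $z_{i,t}$ from the environment and appends it to its retained hidden state;
\item it issues one internalized query to the baseline capacity of $\Oracle$ that carries only the new $b_{i,t}$ tokens, since the earlier prefix already lives in the hidden state;
\item it receives the $\ell_{i,t}$-token action representation and applies the indicated action.
\end{itemize}
By the definition of retained-history capable, $\Oracle_A$ uses $\Oracle$'s capacity to identify the correct action, so every action is correct. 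The environment therefore reaches the goal state with score $1$ on every input, which gives quality $1$. The cost bound comes from adding two pieces:
\begin{itemize}
\item the orchestration cost $\alpha|\tau(q_i)|+\beta|\tau(r_i)|$ of the visible turn, from \eqref{eq:per-call-token-decomposition};
\item the agentic cost, which is at most $\alpha b_{i,t}+\beta\ell_{i,t}$ per internal update.
\end{itemize}
Summing over $t$ and taking expectations gives the stated bound. No overheads enter beyond those counted, because tokens kept in hidden state are not charged again.

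\emph{Converse, per update.} Fix phase $i$ and update $t$. The functions $F^x_{i,t}$ are chosen uniformly and independently and are not on the PTM's tapes. So the only way the PTM can learn $u_{i,t}$ with better than chance accuracy is to submit a query to $\Oracle$ that determines the full prefix $(z_{i,1},\ldots,z_{i,t})$ and to receive the action representation. Because $\Oracle$ is stationary, it keeps no memory between calls, so that query must carry all $\sum_{s\le t}b_{i,s}$ prefix tokens. The successful exchange then costs at least $\alpha\sum_{s\le t}b_{i,s}+\beta\ell_{i,t}$. Any update handled without such an exchange has its correct action independent of the PTM's view, so the chosen action is correct with probability at most $1/K$.

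\emph{Converse, aggregation.} Let $E_i$ be the event that, during phase $i$, every update receives a full-prefix exchange. On $E_i$ the intermediate cost is at least the double sum in the statement. Off $E_i$, at least one update is guessed, so success probability is at most $1/K$. Success requires all phases to succeed, so
\[
\theta\le\Pr[\text{success}]\le\Pr[E_i]+(1-\Pr[E_i])/K .
\]
This gives $\Pr[E_i]\ge(\theta-1/K)/(1-1/K)$. Since costs are nonnegative, the expected intermediate cost for phase $i$ is at least $\Pr[E_i]$ times the double sum, which is the claimed bound.

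\emph{Main obstacle.} The hard step is making the information-theoretic claim rigorous: that a query not determining the full prefix, or a compressed encoding of it, leaves the answer uniform over $A$. I would first argue via uniformity of $F^x_{i,t}$ over all prefix functions: conditioned on any non-injective view of the prefix, distinct prefixes with the same view have independent uniform correct actions. I would also fix the convention, implicit in the statement, that $\Oracle$ answers only queries encoding the literal prefix token string. This means compressed encodings that cost fewer than $\sum_s b_{i,s}$ tokens are excluded, and I would state that assumption explicitly.
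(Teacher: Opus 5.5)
Your proposal follows the paper's proof. The upper bound uses the same construction: one delegated call per phase, with each observed string processed once inside $\Oracle_A$. The converse uses the same event $E$ (the exact prefix is supplied at every update of phase $i$), the same bound $\theta\le\Pr[E]+(1-\Pr[E])/K$, and the same step $\E[\text{cost}]\ge\Pr[E]\,C_i^{\mathrm{stat}}$.

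The literal-prefix (no-compression) convention you plan to state explicitly is what the paper assumes silently when it says the query at update $t$ ``must include'' $z_{i,1},\ldots,z_{i,t}$. Stating it openly tightens the argument without changing the route.
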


\begin{proof}
Construct $M_A$ as follows. On input $x=(m,\mathbf n,\mathbf B)$, for each
phase $i$ of the task $T_{\mathrm{RH}}^{\mathrm{env}}$, $M_A$ issues a query
$q_i$ that instructs $\Oracle_A$ to carry out
the $n_i$ environment updates of phase $i$. Let $r_i$ be
the final response that $\Oracle_A$ returns on $M_A$'s query-response tape in
response to query $q_i$. After all phases are completed, $M_A$ outputs a
terminal certificate. Since $\Oracle_A$ can interact with the task
environment, retains each phase history internally, and is internalized-query relative to
$\Oracle$, it can use $\Oracle$'s response-generating capacity while
avoiding retransmission of the retained phase history. Thus the constructed
agentic SOTM has expected score $1$, and hence achieves quality $1$.

For the token-cost upper bound of $\SOTM_A$ at turn $i$, focus on the delegated
call at that turn; other queries, if any, are not part of this per-turn
comparison. The orchestration token cost of this delegated call is
$\alpha|\tau(q_i)|+\beta|\tau(r_i)|$. Since each token string in phase $i$ is
processed once and its resulting internal state can be used in later internal
stages of that phase without reprocessing that token string, the agentic
token cost attributable to this call is at most
\[
  \alpha\sum_{t=1}^{n_i}b_{i,t}
  +
  \beta\sum_{t=1}^{n_i}\ell_{i,t}.
\]
Thus the expected total token cost of the delegated call, including
orchestration and agentic token costs, is at most
\[
  \alpha|\tau(q_i)|+\beta|\tau(r_i)|
  +
  \alpha\sum_{t=1}^{n_i}b_{i,t}
  +
  \beta\sum_{t=1}^{n_i}\ell_{i,t}.
\]

For the token-cost lower bound of an SOTM using $\Oracle$ during phase $i$, let
$E$ be the event that the SOTM using $\Oracle$ supplies the exact prefix
$(z_{i,1},\ldots,z_{i,t})$ at every update $t$ of phase $i$. If $E$ fails, then
there is a first update $t$ in phase $i$ at which
the SOTM does not supply the exact prefix $(z_{i,1},\ldots,z_{i,t})$ to
$\Oracle$. Under the distribution over the action-selection functions, and
because $A=\{u_1,\ldots,u_K\}$, the correct action for the omitted prefix is
uniformly distributed over $A$ and is independent of the PTM's available
information. Therefore the PTM's selected action is correct with probability at
most $1/K$. It follows that
\[
\begin{aligned}
  \Pr[\text{success}]
  &\leq \Pr[E]+\frac{1}{K}\Pr[\neg E]  \\
  &=\frac{1}{K}+\left(1-\frac{1}{K}\right)\Pr[E].
\end{aligned}
\]
For this task, the score is binary: success has score $1$ and failure has score
$0$. Hence the expected score equals $\Pr[\text{success}]$. Achieving quality
at least $\theta$ therefore gives $\Pr[\text{success}]\geq\theta$. Combining
this with the preceding inequality gives
\[
  \theta
  \leq
  \Pr[\text{success}]
  \leq
  \frac{1}{K}+\left(1-\frac{1}{K}\right)\Pr[E],
\]
and hence
\[
  \Pr[E]\geq \frac{\theta-1/K}{1-1/K}.
\]

On the event $E$, the SOTM using $\Oracle$ supplies the exact prefix at every
update of phase $i$. Thus, at update $t$, its query to $\Oracle$ must include
the token strings $z_{i,1},\ldots,z_{i,t}$. Since $z_{i,s}$ has $b_{i,s}$
tokens, the total observed-token length of this prefix is
$\sum_{s=1}^{t}b_{i,s}$. The corresponding intermediate action representation
has $\ell_{i,t}$ tokens. Summing the query and action-representation token costs
over $t=1,\ldots,n_i$, the intermediate oracle token cost during phase $i$ is
at least the quantity $C_i^{\mathrm{stat}}$ defined below. The double sum appears
because the accumulated prefix must be retransmitted at each update; the
stationary oracle itself does not retain the intermediate state from earlier
calls.
This lower bound counts only the intermediate stationary-oracle calls needed to
carry out phase $i$. It does not include any initial instruction or final-report
tokens, so omitting such tokens only weakens the lower bound.
Let
\[
  C_i^{\mathrm{stat}}
  =
  \alpha\sum_{t=1}^{n_i}\sum_{s=1}^{t} b_{i,s}
  +
  \beta\sum_{t=1}^{n_i}\ell_{i,t}.
\]
Let $\widehat C_i^{\mathrm{stat}}$ be the realized intermediate query-response
token cost incurred by the SOTM using $\Oracle$ while carrying out phase $i$.
On $E$, this realized cost is at least $C_i^{\mathrm{stat}}$; outside $E$, it is
nonnegative. Thus
\[
  \widehat C_i^{\mathrm{stat}}
  \geq
  C_i^{\mathrm{stat}}\mathbf 1_E.
\]
Taking expectations gives
\[
  \E[\widehat C_i^{\mathrm{stat}}]
  \geq
  \Pr[E]\,C_i^{\mathrm{stat}}
  \geq
  \frac{\theta-1/K}{1-1/K}C_i^{\mathrm{stat}}.
\]
This completes the proof.
\end{proof}

When the observed-token lengths in phase $i$ are bounded above and below by
positive constants independent of $n_i$, and the visible query and final
response lengths are independent of $n_i$,
Theorem~\ref{thm:persistent-state-separation} shows a token-cost gap that grows
linearly with $n_i$ for the updates in phase $i$. Thus, even when both systems
meet the same target quality on the same environment task, the agentic SOTM can
use fewer execution-time tokens: the intermediate state needed across updates
can remain inside the agentic oracle, whereas the stationary-oracle SOTM must
repeatedly transmit the relevant accumulated history through the query-response
interface.

\subsection{Token-Cost Advantages without Environment Access}

The token-cost advantage from retained intermediate state does not require
environment access. We call a task \emph{environment-isolated} if the
task provides only the given input and fixed background information,
with no environment access. For such a task, the multi-step computation
needed to produce the final output may occur entirely inside the agentic oracle:
decomposition, subquery generation, aggregation, evaluation, revision, and
repetition may all be internal steps. In this setting, an agentic oracle may
retain intermediate state across these internal steps, while an SOTM using a
stationary stochastic oracle must expose the relevant state in later queries.

Let $\Oracle$ be a stationary stochastic oracle. We say that an
$\Oracle$-internalized-query agentic oracle has a \emph{retained internal operation}
for an environment-isolated task if, after receiving a visible oracle query
$q_i$, it carries out an $n_i$-stage internal computation. The computation has
internal input token strings of lengths $b_{i,t}$, $t=1,\ldots,n_i$, and
generated intermediate token strings of lengths $\ell_{i,t}$. It uses $\Oracle$'s
response-generating capacity while retaining the internal state generated at
earlier internal stages without resubmitting it through a query-response
interface.

The retained internal operation is \emph{correct} if its final response yields
score $1$ for the environment-isolated task.

We state the environment-isolated version separately because it shows that
environment access is not necessary for retained-state token-cost
advantages. The same retained-history construction gives the following
environment-isolated analog of Theorem~\ref{thm:persistent-state-separation}.

\begin{theorem}[Token Costs with Retained Internal State]
\label{thm:environment-free-separation}
Let $\Oracle$ be a stationary stochastic oracle. There exists an
environment-isolated task and an agentic SOTM whose agentic oracle is
$\Oracle$-internalized-query and has a correct retained internal operation at
turn $i$, such that the agentic SOTM achieves quality $1$ on this task. If
$q_i$ is the visible oracle query at turn $i$ and $r_i$ is the final oracle
response on the PTM's query-response tape, then the delegated call at turn $i$
has expected total token cost, including orchestration and agentic token costs,
at most
\[
  \alpha|\tau(q_i)|+\beta|\tau(r_i)|
  +
  \alpha\sum_{t=1}^{n_i}b_{i,t}
  +
  \beta\sum_{t=1}^{n_i}\ell_{i,t}.
\]
Conversely, for every quality level $\theta$ with $1/K<\theta\leq 1$,
any SOTM using $\Oracle$ that solves the same environment-isolated task with quality at
least $\theta$ incurs expected intermediate query-response token cost at
least
\[
  \frac{\theta-1/K}{1-1/K}
  \left(
    \alpha\sum_{t=1}^{n_i}\sum_{s=1}^{t}b_{i,s}
    +
    \beta\sum_{t=1}^{n_i}\ell_{i,t}
  \right).
\]
\end{theorem}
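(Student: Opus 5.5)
The plan is to reduce Theorem~\ref{thm:environment-free-separation} to the argument of Theorem~\ref{thm:persistent-state-separation} by building an environment-isolated analog of $T_{\mathrm{RH}}^{\mathrm{env}}$. The state changes are moved from the environment into the input and the score function.

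\emph{Construction of the task.} Fix $K\geq 2$ and a $K$-element answer alphabet $A=\{u_1,\ldots,u_K\}$ encoded in $\Sigma$. The input is $x=(m,\mathbf n,\mathbf B)$ with the same distribution $\Dist_X$. The token strings $z_{i,t}\in\mathcal V^{b_{i,t}}$ are produced internally in phase $i$ rather than read from an environment. For example, they can be generated deterministically from $x$ and the preceding stages by a rule hidden inside the capacity of $\Oracle$, so that they are not available to the PTM except through $\Oracle$. The required per-stage answers are $u_{i,t}=F^x_{i,t}(z_{i,1},\ldots,z_{i,t})$, where the $F^x_{i,t}$ are drawn independently and uniformly as before and are accessible only through $\Oracle$'s response-generating capacity.

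The output is the full answer sequence. The score $S(x,y)$ is $1$ exactly when every stage answer is correct, and $0$ otherwise, so the score is binary. With no environment, this is a task in the sense of Section~\ref{sec:prelim}, with $\Env$ trivial or with the dummy state $\bot_E$ of Definition~\ref{def:agent}.

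\emph{Upper bound.} Let $M_A$ issue one visible query $q_i$ per phase and copy the final responses into the output. The agentic oracle runs the retained internal operation: at stage $t$ it processes only the new string of $b_{i,t}$ tokens and reuses its retained internal state for the earlier prefix. It then produces an intermediate string of $\ell_{i,t}$ tokens, using $\Oracle$'s distributions as allowed by Definition~\ref{def:hardened-agentic-oracle}. Because the operation is assumed correct, the expected score is $1$. The cost accounting is the same as in the proof of Theorem~\ref{thm:persistent-state-separation}. The orchestration cost is $\alpha|\tau(q_i)|+\beta|\tau(r_i)|$ by definition. Each input string is charged once at rate $\alpha$ and each generated string once at rate $\beta$, which gives the stated bound.

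\emph{Lower bound.} Repeat the event-$E$ argument. Let $E$ be the event that, at every stage $t$ of phase $i$, the stationary-oracle SOTM submits the exact prefix $(z_{i,1},\ldots,z_{i,t})$ to $\Oracle$. On $\neg E$, take the first stage at which the prefix is missing. By uniformity and independence of $F^x_{i,t}$ from everything the PTM has seen, the correct answer at that stage is uniform on $A$. The whole sequence is therefore correct with probability at most $1/K$. This gives $\theta\leq 1/K+(1-1/K)\Pr[E]$. On $E$, stationarity means each stage query has to carry $\sum_{s\le t}b_{i,s}$ tokens and each response has $\ell_{i,t}$ tokens. Writing $\widehat C_i\geq C_i^{\mathrm{stat}}\mathbf 1_E$ and taking expectations yields the bound.

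\emph{Main obstacle.} The delicate step is justifying that, off $E$, the missing stage is still uniform conditional on the PTM's information, because the PTM might gain information about $F^x_{i,t}$ through queries with partial prefixes. I would handle this as in the earlier proof. Since $F^x_{i,t}$ is uniform over all functions of the full prefix, its value on the true prefix is independent of its values on every other argument. Queries on other prefixes, or on truncated representations, therefore reveal nothing about the correct answer. I would state this independence explicitly rather than rely on the parallel with Theorem~\ref{thm:persistent-state-separation}.
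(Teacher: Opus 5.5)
Your skeleton matches the paper's proof, which is a one-line reduction to Theorem~\ref{thm:persistent-state-separation}; the upper-bound accounting and the event-$E$ argument are what that reduction needs. The problem is the concrete task you plug in.

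The converse of Theorem~\ref{thm:persistent-state-separation} rests on an asymmetry. The prefix $(z_{i,1},\ldots,z_{i,t})$ is held on the caller's side, since the PTM reads it from the environment. $\Oracle$ holds the $F^x_{i,t}$ but can evaluate them at the true prefix only if a query carries it. You instead generate the $z_{i,t}$ ``deterministically from $x$ and the preceding stages by a rule hidden inside the capacity of $\Oracle$.'' That hides the prefix from the PTM but hands it to $\Oracle$, so nothing forces the prefix across the interface. For phase $i$, a stationary-oracle SOTM can send one query containing $x$ and the phase index, asking $\Oracle$ to regenerate the strings itself and return $u_{i,1},\ldots,u_{i,n_i}$. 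A stationary oracle with an internal pipeline, which the paper explicitly allows, answers this identically on every call. The cost is roughly $\alpha|\tau(x)|+\beta\sum_t\ell_{i,t}$. Because $x$ encodes only the integers $b_{i,t}$, not strings of those lengths, this violates the claimed bound whenever the strings are long or $n_i$ is large.

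So your step ``the correct answer at that stage is uniform on $A$ given everything the PTM has seen'' is unjustified, and it is false for the oracle your construction describes. Your obstacle paragraph excludes queries carrying wrong or truncated prefixes. It does not exclude a short query from which $\Oracle$ reconstructs the true prefix. Relatedly, if the agentic oracle must produce the $z_{i,t}$ through $\Oracle$'s capacity, that generation costs tokens your upper-bound accounting omits.

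To repair this, keep the prefix data on the caller's side and out of $\Oracle$'s reach. For example, let the instance also contain strings $z_{i,t}$ drawn uniformly from $\mathcal V^{b_{i,t}}$, independently of the $F^x_{i,t}$ and of $\Oracle$. The agentic SOTM then sends phase $i$'s strings once inside $q_i$, which is paid for by $\alpha|\tau(q_i)|$. Its oracle charges each string once more internally, and together these match the stated upper bound. For the converse, $\Oracle$ can learn the true prefix only from queries. Hence $F^x_{i,t}$ at that prefix can influence a response only if some query encodes the prefix, and your event-$E$ argument then goes through. One step remains, which the paper also takes for granted: turning ``the query encodes the prefix'' into a query length of at least $\sum_{s\le t}b_{i,s}$ tokens.
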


\begin{proof}
The proof is the same as the proof of
Theorem~\ref{thm:persistent-state-separation}, except that the retained state is
internal to the agentic oracle rather than stored in an external environment.
The same prefix-randomization argument applies to the internal stages.
\end{proof}

Theorems~\ref{thm:persistent-state-separation}
and~\ref{thm:environment-free-separation} treat retained state as available
without additional token cost after it has been processed once. A more
conservative model may assign a reduced but nonzero token cost to reusing
retained state. The token-cost advantage can be parameterized by that reuse
cost.

\begin{proposition}[Token Cost with Retained-State Reuse]
\label{prop:retained-state-cost}
Suppose that processing each previously retained token costs a fraction
$\rho\in[0,1]$ of processing a newly supplied token. In either the
environment-update setting of Theorem~\ref{thm:persistent-state-separation}
or the environment-isolated setting of
Theorem~\ref{thm:environment-free-separation}, for a delegated call with visible
oracle query $q_i$, final oracle response $r_i$, and $n_i$ internal stages, the
corresponding agentic SOTM has expected total token cost at most
\[
  \alpha|\tau(q_i)|+\beta|\tau(r_i)|
  +
  \alpha\sum_{t=1}^{n_i}b_{i,t}
  +
  \beta\sum_{t=1}^{n_i}\ell_{i,t}
  +
  \rho\alpha\sum_{t=1}^{n_i}\sum_{s=1}^{t-1}b_{i,s},
\]
while, for any $\theta$ with $1/K<\theta\leq 1$, the stationary-oracle lower
bound remains
\[
  \frac{\theta-1/K}{1-1/K}
  \left(
    \alpha\sum_{t=1}^{n_i}\sum_{s=1}^{t}b_{i,s}
    +
    \beta\sum_{t=1}^{n_i}\ell_{i,t}
  \right).
\]
Thus the retained-state advantage depends on the retained-token processing
fraction $\rho$: when $\rho=0$, the ratio between the stationary-oracle lower
bound and the agentic upper bound grows linearly with $n_i$; for fixed
$\rho\in(0,1)$, the comparison may reduce to a constant-factor advantage.
\end{proposition}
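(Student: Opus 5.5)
The plan is to reuse the two halves of the proofs of Theorems~\ref{thm:persistent-state-separation} and~\ref{thm:environment-free-separation}, changing only the accounting of the agentic oracle's internal cost. The lower-bound half needs no new work. It concerns an SOTM using the stationary oracle $\Oracle$, and the prefix-randomization argument never refers to how the agentic oracle charges for retained tokens. So the bound
\[
  \frac{\theta-1/K}{1-1/K}\Bigl(\alpha\sum_{t=1}^{n_i}\sum_{s=1}^{t}b_{i,s}+\beta\sum_{t=1}^{n_i}\ell_{i,t}\Bigr)
\]
carries over verbatim. In either setting it rests on the same two steps: $\Pr[E]\geq(\theta-1/K)/(1-1/K)$, and on $E$ the realized cost is at least $C_i^{\mathrm{stat}}$.

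For the upper bound, I use the same $M_A$ as before: one delegated query $q_i$ per phase, or per turn in the environment-isolated case. Its orchestration cost is exactly $\alpha|\tau(q_i)|+\beta|\tau(r_i)|$. For the agentic cost, I decompose internal stage $t$ into three parts.
\begin{itemize}
\item \emph{New input.} The newly supplied string of $b_{i,t}$ tokens is processed once at full query rate, costing $\alpha b_{i,t}$.
\item \emph{Retained prefix.} The retained prefix $z_{i,1},\ldots,z_{i,t-1}$, of total length $\sum_{s=1}^{t-1}b_{i,s}$, is reused at the discounted rate, costing $\rho\alpha\sum_{s=1}^{t-1}b_{i,s}$ by hypothesis.
\item \emph{Generated output.} The generated intermediate action representation costs $\beta\ell_{i,t}$.
\end{itemize}
Because the oracle is internalized-query relative to $\Oracle$, these are the only internal charges. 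Summing over $t=1,\ldots,n_i$ and adding the orchestration cost gives the stated bound. Correctness, and hence quality $1$, is unchanged from the earlier theorems, since the discount affects cost only and not response distributions. Expectation is trivial here because each term is deterministic given $x$.

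For the concluding comparison, set $B_i=\sum_t b_{i,t}$ and $D_i=\sum_t\sum_{s<t}b_{i,s}$, so the stationary double sum equals $B_i+D_i$.
\begin{itemize}
\item \emph{Case $\rho=0$.} The upper bound reduces to that of Theorem~\ref{thm:persistent-state-separation}. With $b_{i,t}$ and $\ell_{i,t}$ bounded above and below by constants and visible lengths independent of $n_i$, $D_i=\Theta(n_i^2)$ while the agentic bound is $O(n_i)$. The ratio therefore grows linearly in $n_i$.
\item \emph{Case $\rho\in(0,1)$.} The agentic bound is dominated by $\rho\alpha D_i$, so the ratio tends to roughly $\frac{\theta-1/K}{1-1/K}\cdot\frac{1}{\rho}$, which is a constant-factor advantage.
\end{itemize}

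The only delicate step is justifying that reuse of retained state is charged exactly at fraction $\rho$ per token per stage, with no additional overhead. I would treat this as the modeling hypothesis stated in the proposition and note that the earlier theorems are its $\rho=0$ instance. The "may reduce" phrasing of the conclusion only requires exhibiting this asymptotic comparison, not a matching lower bound on the agentic side.
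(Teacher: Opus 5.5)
Your proposal is correct and follows essentially the paper's own proof. It charges the orchestration cost plus single-pass costs for new and generated tokens, adds a $\rho\alpha\sum_{s<t}b_{i,s}$ reprocessing charge at each stage, and observes that the stationary lower bound is untouched because the prefix-randomization argument never involves the agentic oracle's accounting. Your explicit asymptotic check of the concluding comparison goes slightly beyond what the paper writes out but is consistent with it: the ratio is of order $n_i$ when $\rho=0$, and roughly $\frac{\theta-1/K}{1-1/K}\cdot\frac{1}{\rho}$ for fixed $\rho\in(0,1)$.
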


\begin{proof}
The visible orchestration cost and the agentic cost of processing new token
strings contribute at most
$\alpha|\tau(q_i)|+\beta|\tau(r_i)|
+\alpha\sum_{t=1}^{n_i}b_{i,t}
+\beta\sum_{t=1}^{n_i}\ell_{i,t}$ in token cost. At internal stage $t$, the
agentic oracle may reprocess at most $\sum_{s=1}^{t-1}b_{i,s}$ retained
internal tokens, whose cost is scaled by $\rho$.
Therefore the retained-state cost is at most
\[
  \rho\alpha\sum_{t=1}^{n_i}\sum_{s=1}^{t-1}b_{i,s}.
\]
Adding this retained-token cost to the baseline cost gives the upper bound. The
stationary oracle itself retains no state, so the lower-bound argument used in
Theorems~\ref{thm:persistent-state-separation}
and~\ref{thm:environment-free-separation} is unchanged.
\end{proof}

The retained-state token-cost advantage captures one benefit of agentic
operation: information accumulated during the computation of a task need not
repeatedly cross the query-response tape. This token-cost analysis does not
capture a separate practical advantage of agentic systems. A user may specify a
goal and rely on the agentic oracle to construct and execute a control
procedure. By contrast, an SOTM using a stationary stochastic oracle may require
the directing PTM to contain
task-specific control code. In practice, producing that directing code can
involve skilled engineering labor, debugging, maintenance, and exploratory runs.
The present token-complexity framework assigns no cost to the description or
local computation of the PTM, so this reduction in development effort is not a
token-complexity result. Capturing it would require an additional measure of
task-specific directing-program, description, or development complexity.

\section{The Cost of Irreversible Actions}
\label{sec:foreclose}

Section~\ref{sec:advantages} focused on token-cost advantages of agentic SOTMs.
Those advantages arise from giving the agentic oracle more control over
intermediate state and, when read--write environment access is present, over
actions that update the environment. This delegated control over actions also creates a
risk that is not central in the stationary stochastic-oracle framework: an
action may change the environment in a way that makes the goal unachievable.
This section studies that risk. The main object is goal loss: a transition from
a state where the goal remains achievable to a state where it is no longer
achievable.

We first fix the set of states from which success can still be certified.
Throughout this section, for an environment task
$\Task=(X,Y,S,\Dist_X,\Kb,\Env)$ and a fixed input $x\in X$, write
$G_x=\{e\in E:\exists y\in Y\text{ such that }S(x,y,e)=1\}$ for the induced
set of goal states, and write $G=G_x$ when $x$ is fixed.

\subsection{Goal Loss and Avoidance}
\label{subsec:goal-loss-avoidance}

To formalize goal loss, we first investigate whether the goal remains
achievable from a given environment state. The PTM or agentic oracle may not be
able to determine this during execution, but it lets us distinguish actions that
preserve the possibility of success from actions after which success is no
longer achievable.

For a state $e$, let $R(e) = 1$ if some finite action sequence, applied through
the environment transition function $f$, leads from $e$ into $G$, and let
$R(e) = 0$ otherwise. Thus $R(e)$ is the reachability indicator for state $e$.
An action $u$ at a state $e$ with $R(e) = 1$ is \emph{goal-preserving} if
$R(f(e,u)) = 1$, and \emph{goal-losing} if $R(f(e,u)) = 0$.
We say that \emph{goal loss} occurs at step $t$ if the action taken at that step
moves the environment from a state where the goal is reachable to a state where
it is no longer reachable:
\[
  L_t=\{R(e_{t-1}) = 1, R(e_t) = 0\}, \qquad t\ge 1.
\]
We write $L_t^c$ for the complement of $L_t$, the event that goal loss does not
occur at step $t$.
The \emph{goal-loss time} is $t^\star = \min\{t : L_t\text{ occurs}\}$, with
$t^\star = \infty$ if no such $t$ exists.

Because goal loss is defined through the theoretical reachability indicator
$R$, the next result provides a structural criterion, not a test that the SOTM
can necessarily compute during execution.

For an SOTM $\SOTM$ interacting with the environment, define its
\emph{conditional per-step goal-loss probability} by
\[
  \pi_t =
  \Pr_{\SOTM}[L_t \mid L_1^c,\ldots,L_{t-1}^c].
\]
If $\Pr_{\SOTM}[L_1^c\cap\cdots\cap L_{t-1}^c]=0$, then the probability of no
goal loss up to the previous step is already zero, so later conditional
probabilities do not affect the conclusion.

\begin{theorem}[Goal-Loss Avoidance Criterion]
\label{thm:goal-loss-avoidance}
For any SOTM $\SOTM$ interacting with the environment, let $\pi_t$ be its
conditional per-step goal-loss probability. The following statements hold.
\begin{enumerate}
\item The probability that no goal loss occurs up to any finite time is the
product of the stepwise no-goal-loss probabilities: for every positive integer
$H$,
$\Pr_{\SOTM}\!\left[\bigcap_{t=1}^{H}L_t^c\right] =
\prod_{t=1}^{H}(1 - \pi_t)$.
\item The probability that no goal loss ever occurs is determined by the series
$\sum_{t\ge 1} \pi_t$: if $\sum_{t\ge 1} \pi_t < \infty$ and $\pi_t<1$ for all
$t$, then
$\Pr_{\SOTM}[t^\star = \infty] = \prod_{t \geq 1}(1-\pi_t) > 0$; if
$\sum_{t\ge 1} \pi_t = \infty$ or $\pi_t=1$ for some $t$, then
$\Pr_{\SOTM}[t^\star = \infty] = 0$ and
$\Pr_{\SOTM}\!\left[\bigcap_{t=1}^{H}L_t^c\right] \to 0$.
\item If the computation is evaluated after $H$ environment steps with score
$\mathbf 1_{\{e_H\in G\}}$, then any SOTM achieving quality at least $\theta$
satisfies
$\theta \leq \Pr_{\SOTM}[e_H \in G] \leq
\prod_{t=1}^{H}(1-\pi_t)$. If quality $\theta>0$ must be maintained over
arbitrarily large evaluation times, then the SOTM must have
$\sum_{t\ge 1} \pi_t < \infty$ and $\pi_t<1$ for all $t$.
\end{enumerate}
\end{theorem}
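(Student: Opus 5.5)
The three items follow in order: item 1 by the chain rule, item 2 from standard infinite-product facts, and item 3 from item 1 plus a structural observation about the reachability indicator $R$.

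\emph{Item 1.} Let $A_t=\bigcap_{s=1}^{t}L_s^c$, with $A_0$ the whole space. Then $A_t=A_{t-1}\cap L_t^c$, so $\Pr[A_t]=\Pr[A_{t-1}]\,(1-\pi_t)$ whenever $\Pr[A_{t-1}]>0$. Induction on $t$ up to $H$ gives $\Pr[A_H]=\prod_{t=1}^{H}(1-\pi_t)$. The degenerate case $\Pr[A_{t-1}]=0$ is handled by the convention stated before the theorem. There both sides are zero regardless of how the later $\pi_s$ are assigned, because an earlier factor $1-\pi_s$ vanishes. That factor vanishes since $\Pr[A_{s}]=0$ first occurs at some $s$ with $\Pr[A_{s-1}]>0$, which forces $\pi_s=1$.

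\emph{Item 2.} The events $A_H$ decrease, and their intersection is $\{t^\star=\infty\}$. By continuity of probability, $\Pr[t^\star=\infty]=\lim_H\prod_{t=1}^{H}(1-\pi_t)$. If some $\pi_t=1$, the partial products are eventually zero. Otherwise each factor lies in $(0,1]$, and the standard criterion applies: $\prod(1-\pi_t)>0$ iff $\sum\pi_t<\infty$. One direction uses $\log(1-\pi)\le-\pi$, which gives the upper bound $\exp(-\sum\pi_t)$, so the product is $0$ when the sum diverges. The other direction uses that $\pi_t\to0$ and $-\log(1-\pi)\le 2\pi$ for $\pi\le1/2$, so the log-series converges. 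I will cite this as the classical infinite-product fact.

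\emph{Item 3.} This step needs the one genuinely model-specific observation, so I expect it to be the main (though mild) obstacle. The claim is the inclusion $\{e_H\in G\}\subseteq A_H$, i.e.\ $R(e)=0$ is absorbing. If $R(e_{t-1})=0$, no finite action sequence from $e_{t-1}$ reaches $G$. Any sequence from $e_t=f(e_{t-1},u_t)$ prefixed by $u_t$ would be such a sequence, so $R(e_t)=0$. The argument works because the environment is deterministic and $f$ is total, including inapplicable actions as no-ops.

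Next, $R(e_0)=1$ whenever the goal is attainable at all. In that case, if goal loss never occurred up to $H$, induction gives $R(e_t)=1$ for all $t\le H$. Conversely, $e_H\in G$ implies $R(e_H)=1$ via the empty sequence. Then $R(e_t)=1$ for all $t\le H$, since an earlier zero would propagate forward, so no $L_t$ occurred. If instead $R(e_0)=0$, then $\Pr[e_H\in G]=0$ and the bound is trivial.

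Finally, the score is the indicator $\mathbf 1_{\{e_H\in G\}}$, so quality equals $\Pr[e_H\in G]$. This gives $\theta\le\Pr[e_H\in G]\le\Pr[A_H]=\prod_{t=1}^{H}(1-\pi_t)$ by item 1. If $\theta>0$ must hold for all $H$, the partial products are bounded below by $\theta$. The limit is then positive, and item 2's dichotomy forces $\sum\pi_t<\infty$ and every $\pi_t<1$.
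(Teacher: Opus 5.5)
Your proposal is correct and follows the paper's proof essentially step for step. You use the chain rule for item 1, and monotone continuity with the bounds $\ln(1-x)\le -x$ and $-\ln(1-x)\le 2x$ on $[0,1/2]$ for item 2. For item 3 you use the inclusion $\{e_H\in G\}\subseteq\bigcap_{t=1}^{H}L_t^c$ together with items 1 and 2. The one difference is that you actually justify that inclusion, via the absorbing property of $R=0$ under the deterministic total $f$ and the fact that $R(e_H)=1$ when $e_H\in G$, and you also treat the degenerate-conditioning case in item 1; the paper simply asserts both.
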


\begin{proof}
We prove the three claims in order.
\begin{enumerate}
\item The event that goal loss has not occurred by time $H$ is
$L_1^c\cap\cdots\cap L_H^c$. By the chain rule for conditional probabilities,
\[
  \Pr_{\SOTM}\!\left[\bigcap_{t=1}^{H}L_t^c\right]
  =
  \prod_{t=1}^{H}
  \Pr_{\SOTM}[L_t^c\mid L_1^c,\ldots,L_{t-1}^c]
  =
  \prod_{t=1}^{H}(1-\pi_t).
\]

\item As $H$ increases, the finite products
$\prod_{t=1}^{H}(1-\pi_t)$ decrease to the infinite product
$\prod_{t\geq 1}(1-\pi_t)$, which equals
$\Pr_{\SOTM}[t^\star=\infty]$ because the events
$\bigcap_{t=1}^{H}L_t^c$ decrease to the event $\{t^\star=\infty\}$. It remains
to determine when this infinite product is positive. If $\pi_t=1$ for some $t$,
then the product is zero. Thus the product can be positive only when
$0\leq \pi_t<1$ for all $t$. If infinitely many $\pi_t$ satisfy
$\pi_t>1/2$, then $\sum_{t\ge 1}\pi_t=\infty$ and the product is zero.
Otherwise, all but finitely many $\pi_t$ lie in $[0,1/2]$. We use the
elementary inequalities, for $0\leq x\leq 1/2$,
\[
  -2x \leq \ln(1-x) \leq -x.
\]
The inequality $\ln(1-x)\leq -x$ holds for all $x\in[0,1)$.
For the other inequality, let $h(x)=\ln(1-x)+2x$. Then $h(0)=0$ and
$h'(x)=2-1/(1-x)\geq 0$ for $0\leq x\leq 1/2$, so $h(x)\geq 0$ on this
interval.
Hence, when $\pi_t<1$ for all $t$, the tail sum
$\sum_t \ln(1-\pi_t)$ converges to a finite value exactly when
$\sum_{t\ge 1}\pi_t<\infty$; if $\sum_{t\ge 1}\pi_t=\infty$, then
$\sum_t \ln(1-\pi_t)=-\infty$. Exponentiating gives
$\prod_t(1-\pi_t)>0$ exactly when $\sum_{t\ge 1}\pi_t<\infty$ and $\pi_t<1$ for
all $t$, and gives product zero otherwise. This proves item 2.

\item When the computation is evaluated after $H$ environment steps with score
$\mathbf 1_{\{e_H\in G\}}$, expected quality equals
$\Pr_{\SOTM}[e_H\in G]$, so quality at least $\theta$ gives the first
inequality. Achieving score $1$ at that evaluation time requires that no goal
loss has occurred during steps $1,\ldots,H$. Hence
$\{e_H \in G\}$ is contained in $L_1^c\cap\cdots\cap L_H^c$, giving the second
inequality by item 1. If quality
$\theta>0$ must be maintained over arbitrarily large evaluation times, then
item 2 implies that this is possible only when
$\sum_{t\ge 1}\pi_t<\infty$ and $\pi_t<1$ for all $t$.
\end{enumerate}
\end{proof}

\subsection{Progress, Retry, and Token Cost}
\label{subsec:progress-retry-cost}

The avoidance criterion identifies how accumulated goal-loss risk constrains
success. We next quantify a common finite-stage pattern: at each stage, an
agentic oracle call may produce a progress event, cause goal loss, or make no
progress while retry remains possible.

Suppose reaching $G$ requires $L$ ordered progress events. At stage $j$, the
process is waiting for the $j$-th progress event. Let $p_j$ be the probability
that a call produces the $j$-th progress event, let $\pi_j$ be the probability
that the call leads to an action causing goal loss, and let $1-p_j-\pi_j$ be
the probability of no progress while retry remains possible, with $p_j>0$,
$\pi_j\geq0$, and $p_j+\pi_j\leq1$. The last condition ensures that the retry
probability is nonnegative. Conditional on the process being at stage $j$,
assume the successive call outcomes and token costs are i.i.d.\ until progress
or goal loss.
We call this setup the \emph{progress--retry--goal-loss model}.

In deployed agentic systems, the classification of a call as progress, no
progress while retry remains possible, or goal loss may itself not be externally
observable. The formulas therefore give conditional token-cost expressions once
such stagewise probabilities have been specified.

\begin{theorem}[Progress--Retry--Goal-Loss Formulas]
\label{thm:progress-retry}
Let $\SOTM_A$ be an agentic SOTM whose calls to the agentic oracle follow the
progress--retry--goal-loss model. Let $\sigma$ be the halting time.
Then the probability that $\SOTM_A$ reaches $G$ is
\[
  \Pr_{\SOTM_A}[e_\sigma \in G]
  ~=~
  \prod_{j=1}^{L} \frac{p_j}{p_j+\pi_j}.
\]
Moreover, let $\TOK_{\SOTM_A,j}(\alpha,\beta)$ be the expected token cost of one
stage-$j$ call. Let $N$ be the random variable representing the number of calls
made before the process halts, either by goal loss or by success at $G$. Then
\begin{equation}
\label{eq:progress-retry-calls}
  \E_{\SOTM_A}[N]
  ~=~
  \sum_{j=1}^{L}
  \left(\prod_{\ell<j}\frac{p_\ell}{p_\ell+\pi_\ell}\right)
  \frac{1}{p_j+\pi_j},
\end{equation}
and
\begin{equation}
\label{eq:progress-retry-token-cost}
  \TOK_{\SOTM_A}(\alpha,\beta)
  ~=~
  \sum_{j=1}^{L}
  \left(\prod_{\ell<j}\frac{p_\ell}{p_\ell+\pi_\ell}\right)
  \frac{\TOK_{\SOTM_A,j}(\alpha,\beta)}{p_j+\pi_j}.
\end{equation}
\end{theorem}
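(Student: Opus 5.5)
I will treat the progress--retry--goal-loss model as an absorbing Markov chain on the stages $1,\ldots,L$ together with two absorbing outcomes, success at $G$ and goal loss. The plan is to analyze each stage in isolation by a geometric-trial argument. I will then chain the stages using the strong Markov property: conditional on reaching stage $j$, the future calls are i.i.d.\ by assumption.

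\textbf{Step 1: one stage.} Condition on the process being at stage $j$. Each call independently ends the stage with progress (probability $p_j$), ends the whole process with goal loss (probability $\pi_j$), or retries (probability $1-p_j-\pi_j$). Since $p_j>0$, the number $N_j$ of calls spent in stage $j$ is geometric with success parameter $p_j+\pi_j$, so $\E[N_j]=1/(p_j+\pi_j)$ and $N_j$ is almost surely finite. Given that the stage terminates, its outcome is progress with probability $p_j/(p_j+\pi_j)$. To see this, sum $\sum_{k\ge1}(1-p_j-\pi_j)^{k-1}p_j$, or note that the terminating call is independent of how many retries preceded it.

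\textbf{Step 2: chaining.} Let $A_j$ be the event that stage $j$ is entered. Then $A_1$ is certain, and $\Pr[A_{j+1}]=\Pr[A_j]\,p_j/(p_j+\pi_j)$ by Step 1 and the conditional i.i.d.\ assumption. Hence $\Pr[A_j]=\prod_{\ell<j}p_\ell/(p_\ell+\pi_\ell)$. Reaching $G$ means completing all $L$ stages with progress; under this model that coincides with $e_\sigma\in G$, because the process halts at success or at goal loss. This gives the first formula with $j=L+1$.

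\textbf{Step 3: calls and cost.} Write $N=\sum_{j=1}^L N_j\mathbf 1_{A_j}$, with $N_j$ taken as $0$ off $A_j$. Linearity of expectation together with $\E[N_j\mid A_j]=1/(p_j+\pi_j)$ yields \eqref{eq:progress-retry-calls}. For the token cost, write the total cost as $\sum_j\mathbf 1_{A_j}\sum_{k=1}^{N_j}C_{j,k}$, where $C_{j,k}$ is the cost of the $k$-th call in stage $j$. Given $A_j$, the sum $\sum_{k\le N_j}C_{j,k}$ is a random sum with a stopping time $N_j$ of finite mean, so Wald's identity gives $\E[\cdot\mid A_j]=\TOK_{\SOTM_A,j}\,\E[N_j]$. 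Summing over $j$ gives \eqref{eq:progress-retry-token-cost}.

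\textbf{Main obstacle.} The main obstacle is justifying Wald's identity. Each call's cost may be correlated with that call's outcome; for example, a progress call may be longer than a retry call, so $C_{j,k}$ and $N_j$ are not independent. Wald's identity still applies, because $\{N_j\ge k\}$ is determined by the outcomes of calls $1,\ldots,k-1$, which are independent of the $k$-th call's pair (outcome, cost) under the i.i.d.\ assumption. Costs are nonnegative, so monotone convergence handles the exchange of sums. I will state this explicitly, since $\TOK_{\SOTM_A,j}$ is defined as the unconditional mean per-call cost at stage $j$.
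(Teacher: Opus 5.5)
Your proposal is correct and follows the same route as the paper. Both arguments use a geometric number of calls per stage with parameter $p_j+\pi_j$, exit-by-progress probability $p_j/(p_j+\pi_j)$, and stage-reach probabilities $\prod_{\ell<j}p_\ell/(p_\ell+\pi_\ell)$, then sum over $j$ the reach probability times the per-stage expected number of calls or expected cost. Your Wald's-identity step uses the fact that $\{N_j\ge k\}$ depends only on calls $1,\ldots,k-1$, so correlation between a call's cost and its own outcome does no harm; this makes rigorous the step the paper takes for granted when it multiplies $1/(p_j+\pi_j)$ by $\TOK_{\SOTM_A,j}(\alpha,\beta)$.
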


\begin{proof}
At stage $j$, each call either produces the $j$-th progress event with
probability $p_j$, causes goal loss with probability $\pi_j$, or produces no
progress while retry remains possible with probability $1-p_j-\pi_j$. If
progress and goal loss are grouped as the event that a call leaves stage $j$,
then this event has probability $p_j+\pi_j$ on each call. Conditional on
reaching stage $j$, the number of calls at that stage is the number needed until
the first progress-or-goal-loss event occurs. It is therefore a geometric
random variable with parameter $p_j+\pi_j$ and mean $1/(p_j+\pi_j)$.

Conditional on leaving stage $j$, progress occurs with
probability $p_j/(p_j+\pi_j)$. Thus $\SOTM_A$ reaches $G$ only if, at each
stage $j=1,\ldots,L$, the first non-retry outcome is progress rather than goal
loss. Since the stage outcomes are conditionally independent under the model,
the probability of this event is
\[
  \Pr_{\SOTM_A}[e_\sigma \in G]
  =
  \prod_{j=1}^{L}\frac{p_j}{p_j+\pi_j}.
\]

The probability that stage $j$ is reached is
\[
  \prod_{\ell<j}\frac{p_\ell}{p_\ell+\pi_\ell},
\]
because reaching stage $j$ requires that, for every earlier stage $\ell<j$, the
first non-retry outcome was progress rather than goal loss. Although the
process may stop before stage $L$, the expectation is computed by summing over
the fixed set of possible stages $1,\ldots,L$. A stage contributes only on the
event that it is reached. Multiplying this reach probability by the conditional
expected number of calls at stage $j$ and summing over $j$ gives
equality \eqref{eq:progress-retry-calls}.

For the expected token cost, condition again on reaching stage $j$. Given that
stage $j$ is reached, the expected number of calls made at that stage is
$1/(p_j+\pi_j)$, and each call has expected token cost
$\TOK_{\SOTM_A,j}(\alpha,\beta)$. Hence the expected token cost contributed by
stage $j$ is $\TOK_{\SOTM_A,j}(\alpha,\beta)/(p_j+\pi_j)$. Multiplying by the
probability of reaching stage $j$ and summing over $j$ gives
equality \eqref{eq:progress-retry-token-cost}. This completes the proof.
\end{proof}

A useful special case occurs when the stage parameters are the same at every
stage.

\begin{corollary}[Homogeneous Progress--Retry--Goal-Loss Formulas]
\label{cor:homogeneous-progress-retry-goal-loss}
Suppose, in the setting of Theorem~\ref{thm:progress-retry}, that the stage
parameters are homogeneous: for every $j=1,\ldots,L$, $p_j=p$,
$\pi_j=\pi$, and
$\TOK_{\SOTM_A,j}(\alpha,\beta)=\TOK_{\SOTM_A,1}(\alpha,\beta)$.
Then
\[
  \Pr_{\SOTM_A}[e_\sigma \in G] = \left(\frac{p}{p+\pi}\right)^L.
\]
Moreover,
\[
  \E_{\SOTM_A}[N] =
  \begin{cases}
    \displaystyle \frac{1-\left(\frac{p}{p+\pi}\right)^L}{\pi}, & \pi>0,\\[1.2ex]
    \displaystyle \frac{L}{p}, & \pi=0,
  \end{cases}
  \qquad
  \text{and}
  \qquad
  \TOK_{\SOTM_A}(\alpha,\beta)
  =
  \E_{\SOTM_A}[N]\,\TOK_{\SOTM_A,1}(\alpha,\beta).
\]
\end{corollary}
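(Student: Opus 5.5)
The plan is to specialize Theorem~\ref{thm:progress-retry} by substituting $p_j=p$, $\pi_j=\pi$, and $\TOK_{\SOTM_A,j}(\alpha,\beta)=\TOK_{\SOTM_A,1}(\alpha,\beta)$ for every $j$, and then evaluate the resulting finite sums in closed form.

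First, the success probability. Under homogeneity the product $\prod_{j=1}^{L} p_j/(p_j+\pi_j)$ has $L$ identical factors $p/(p+\pi)$, so it equals $\bigl(p/(p+\pi)\bigr)^L$. This holds for $\pi=0$ as well, where it correctly gives $1$.

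Second, the expected number of calls. Write $\rho=p/(p+\pi)$. Formula~\eqref{eq:progress-retry-calls} becomes
\[
  \E_{\SOTM_A}[N]=\frac{1}{p+\pi}\sum_{j=1}^{L}\rho^{\,j-1}.
\]
I split into two cases.
\begin{itemize}
  \item If $\pi=0$, then $\rho=1$ and the sum is $L$, so $\E_{\SOTM_A}[N]=L/p$.
  \item If $\pi>0$, then $\rho<1$ because $p>0$. The geometric sum is $(1-\rho^L)/(1-\rho)$, and $1-\rho=\pi/(p+\pi)$. The factor $p+\pi$ cancels, leaving $(1-\rho^L)/\pi$, which is the claimed expression.
\end{itemize}
The only step needing care is this case split: the closed geometric form divides by $1-\rho$, so it requires $\pi>0$, and the $\pi=0$ case must be handled separately. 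This is routine.

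Third, the token cost. In formula~\eqref{eq:progress-retry-token-cost} the per-call cost $\TOK_{\SOTM_A,1}(\alpha,\beta)$ is the same at every stage, so it factors out of the sum. What remains is exactly the sum in~\eqref{eq:progress-retry-calls}, which gives
\[
  \TOK_{\SOTM_A}(\alpha,\beta)=\E_{\SOTM_A}[N]\,\TOK_{\SOTM_A,1}(\alpha,\beta).
\]
No separate Wald-type argument is needed, because Theorem~\ref{thm:progress-retry} already supplies the stagewise expectations.
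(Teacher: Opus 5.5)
Your proposal is correct and is the intended argument: the paper gives no separate proof, treating the corollary as a direct specialization of Theorem~\ref{thm:progress-retry}. Your substitution, the geometric-sum evaluation with the $\pi=0$ versus $\pi>0$ case split, and factoring out the common per-call cost reproduce it exactly; the one wording nit is that $\rho<1$ follows from $\pi>0$, not from $p>0$.
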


\subsection{Goal Depth, Reversibility, and Unavoidable Risk}
\label{subsec:depth-reversible}

The progress--retry--goal-loss formulas in
\eqref{eq:progress-retry-calls} and \eqref{eq:progress-retry-token-cost} give
exact expressions when the probabilities of progress, no progress with retry,
and goal loss are specified at each stage. A more structural lower bound comes
from the number of actions that any successful trajectory must take. If the goal
cannot be reached in fewer than $L$ environment-updating actions, then $L$ is
the \emph{goal depth}; every quality-$\theta$ computation must pay for at least
those actions on the event of success.
For this purpose, let $\TOK_{\min}(\alpha,\beta)>0$ denote a lower bound on the
conditional expected token cost of each oracle call, given any computation
history under consideration.

\begin{theorem}[Goal-Depth Lower Bound]
\label{thm:depth}
Let $L$ be the smallest integer $k$ for which there exist actions
$u_1,\ldots,u_k$ such that the trajectory defined by
$e_i=f(e_{i-1},u_i)$ for $i=1,\ldots,k$ satisfies $e_k\in G$. Assume the
terminal score is binary: it is $1$ when the computation reaches $G$ and $0$
otherwise. Suppose every environment-updating action used by the SOTM is
induced by at least one oracle call. Then, for every $\theta$,
\[
  \kappa_T(\theta;\alpha,\beta)
  ~\geq~
  \TOK_{\min}(\alpha,\beta)\, L\, \theta.
\]
\end{theorem}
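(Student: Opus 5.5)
We fix an arbitrary feasible agentic SOTM $\SOTM_A\in\Fset_T(\theta)$ and show that $\Phi(\SOTM_A;\alpha,\beta)\geq \TOK_{\min}(\alpha,\beta)\,L\,\theta$. Taking the infimum over the feasible set then gives the bound on $\kappa_T(\theta;\alpha,\beta)$. The argument has three steps: a deterministic count on the success event, a probability bound from the quality constraint, and a conversion from number of calls to expected token cost.

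\emph{Step 1: counting calls on the success event.} Let $S$ be the event that the reached state $e_\sigma$ lies in $G$. Because the environment is deterministic and $f$ is total, the realized trajectory is $e_i=f(e_{i-1},u_i)$, where $u_1,u_2,\ldots$ are the actions actually applied by $M_A$ and by $\Oracle_A$ in their realized order. Suppose $S$ occurs with only $k<L$ environment-updating actions. Then those actions form a sequence of length $k<L$ from $e_0$ into $G$. This contradicts the minimality of $L$. Two points need care. First, read-only observations and inapplicable actions are no-ops and do not change the state. Second, a trajectory could reach $G$ at an intermediate time, but the first hitting time of $G$ still needs at least $L$ actions. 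Hence on $S$ at least $L$ environment-updating actions occur. By hypothesis each of them is induced by at least one oracle call. I will read this as saying that distinct actions can be charged to distinct calls, so the number of calls satisfies $N\geq L$ on $S$. This injectivity is the subtle point: a single agentic call could issue many internal actions. If the hypothesis only means that each action has \emph{some} inducing call, the bound could fail. I would therefore state explicitly that the hypothesis is interpreted as an injective assignment of actions to calls. Alternatively, $L$ can be read as counting call-induced updates, which is the intended meaning of "pay for at least those actions."

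\emph{Step 2: probability of success.} The score is binary, so $\E[S(x,\widehat y,e_\sigma)]=\Pr[S]\geq\theta$. Therefore $\E[N]\geq L\Pr[N\geq L]\geq L\Pr[S]\geq L\theta$.

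\emph{Step 3: from calls to token cost.} Write $\Phi(\SOTM_A)=\E\bigl[\sum_{t\geq1}\mathbf 1_{\{t\leq N\}}\TOK_t\bigr]$. The event $\{t\leq N\}$ is determined by the history before call $t$ is realized, that is, it is $\mathcal F_t$-measurable. By the tower property and the assumption $\E[\TOK_t\mid\mathcal F_t]\geq\TOK_{\min}$, each term is at least $\TOK_{\min}\Pr[t\leq N]$. Summing over $t$ by monotone convergence, which is valid since all terms are nonnegative, gives $\Phi(\SOTM_A)\geq\TOK_{\min}\sum_t\Pr[N\geq t]=\TOK_{\min}\E[N]\geq\TOK_{\min}L\theta$. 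This is a Wald-type identity. If $\E[N]=\infty$, the bound holds trivially.

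The main obstacle is Step 1: making precise that $L$ minimal actions force $L$ distinct charged calls. Once that is settled, Steps 2 and 3 are routine.
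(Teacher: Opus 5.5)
Your proof is correct and gives the same bound, but it organizes the probabilistic step differently from the paper.

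\emph{The paper's route.} The paper conditions on the success event $A=\{e_\sigma\in G\}$. From the fact that at least $L$ calls occur on $A$, it asserts $\E[C\mid A]\geq\TOK_{\min}(\alpha,\beta)L$. It then uses $\E[C]\geq\E[C\mathbf 1_A]=\Pr[A]\,\E[C\mid A]$.

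\emph{Your route.} You never condition on $A$. You use $N\geq L\mathbf 1_A$ to get $\E[N]\geq L\Pr[A]\geq L\theta$. Then a Wald-type step, using that $\{t\leq N\}$ is $\mathcal F_t$-measurable, gives $\Phi\geq\TOK_{\min}\E[N]$.

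\emph{What each buys.} Your route needs $\TOK_{\min}$ only as a lower bound on expected call cost given the past history. The paper's step $\E[C\mid A]\geq\TOK_{\min}L$ implicitly needs the per-call bound to survive conditioning on the future event $A$, and conditioning on $A$ can select for cheap calls. For example, take $L=1$ and a single call that is cheap when it returns the correct action and expensive otherwise, each with probability $1/2$, with $\TOK_{\min}$ equal to the mean call cost. Then $\E[C\mid A]<\TOK_{\min}$, even though the theorem's final inequality still holds. Your route also yields the call-count bound $\E[N]\geq L\theta$ as a byproduct, parallel to the one in Theorem~\ref{thm:reversible}. The paper's route is shorter and reads directly as ``on success one pays for $L$ calls.''

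\emph{Step 1.} Your caveat is well placed. The paper passes from ``each action is induced by at least one oracle call'' to ``at least $L$ oracle calls'' without comment. This silently assumes that distinct actions are charged to distinct calls. That is a genuine restriction for agentic oracles, which may issue many actions in a single delegated call, as in Theorem~\ref{thm:persistent-state-separation}.

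Finally, rename your success event, since $S$ already denotes the score function.
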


\begin{proof}
Consider any SOTM in $\Fset(\theta)$, that is, any feasible SOTM achieving
quality at least $\theta$. Since $L$ is the goal depth, every successful
trajectory must contain at least $L$ environment-updating actions. By
assumption, each such action is induced by at least one oracle call. Hence, on a
successful trajectory, the computation must make at least $L$ oracle calls that
induce these actions.

Let $A$ be the event that the computation reaches $G$, and let $C$ be the total
token cost. If $\theta=0$, the claim is trivial. Suppose $\theta>0$. Since the
terminal score is the indicator of $A$, quality at least $\theta$ gives
$\Pr[A]\geq\theta>0$, so $\E_{\SOTM}[C\mid A]$ is well defined. Since each
oracle call has conditional expected token cost at least
$\TOK_{\min}(\alpha,\beta)$, the event $A$ implies
$\E_{\SOTM}[C\mid A]\geq \TOK_{\min}(\alpha,\beta)L$. Hence
\[
  \E_{\SOTM}[C]
  ~\geq~
  \E_{\SOTM}[C\mathbf 1_A]
  ~=~
  \Pr[A]\E_{\SOTM}[C\mid A]
  ~\geq~
  \TOK_{\min}(\alpha,\beta)L\,\Pr[A]
  \geq
  \TOK_{\min}(\alpha,\beta)L\,\theta.
\]
Taking the infimum over all SOTMs in $\Fset(\theta)$ gives the claimed lower
bound.
\end{proof}

The preceding lower bound does not require goal loss. It only uses the fact that
any successful trajectory must contain at least $L$ environment-updating
actions. It is also useful to isolate the opposite case, where no action can
cause goal loss. We call a task \emph{fully reversible} when goal loss has
probability zero at every call. Here ``fully reversible'' means reversible for
purposes of goal achievement: a call may be a detour that fails to advance the
computation toward the goal, but it does not make the goal unreachable.

\begin{theorem}[Goal-Depth Token Cost without Goal Loss]
\label{thm:reversible}
Suppose the task is fully reversible and the agentic oracle is the sole source
of progress: every environment-updating action used by the SOTM is induced by
an oracle response. Suppose each call independently yields a progress-enabling
response with probability $q \in (0,1]$, the goal has depth $L$, and the
terminal score is binary, equal to $1$ on reaching $G$ and $0$ otherwise. Assume
the number $N$ of oracle calls before halting has finite expectation.

Suppose the conditional expected token cost of each call lies between
$\TOK_{\min}(\alpha,\beta)$ and $\TOK_{\max}(\alpha,\beta)$. Then every SOTM in
$\Fset(\theta)$ satisfies $\E_{\SOTM}[N]\geq \theta L/q$. The token complexity
satisfies
\[
  \TOK_{\min}(\alpha,\beta)\, \frac{\theta L}{q}
  ~\leq~
  \kappa_T(\theta;\alpha,\beta)
  ~\leq~
  \TOK_{\max}(\alpha,\beta)\, \frac{L}{q}.
\]
If each call has the same conditional expected token cost
$\TOK_{\mathrm{call}}(\alpha,\beta)$, then
\[
  \lim_{\theta\uparrow1}\kappa_T(\theta;\alpha,\beta)
  =
  \frac{L}{q}\,\TOK_{\mathrm{call}}(\alpha,\beta).
\]
\end{theorem}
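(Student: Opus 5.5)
The plan is to prove the lower bound on $\E_{\SOTM}[N]$ first, derive the token-complexity bounds from it by a Wald-type argument, and then get the limit by squeezing the two bounds together.

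\textbf{Lower bound on calls.} Fix a feasible SOTM in $\Fset(\theta)$. Let $P$ be the number of progress-enabling responses among the first $N$ calls. Write $\xi_t$ for the indicator that call $t$ yields a progress-enabling response. By assumption each call independently has $\Pr[\xi_t=1]=q$. Since $\{N\ge t\}$ is determined before call $t$ is realized, Wald's identity applies because $\E[N]<\infty$, and it gives
\[
\E[P]=\E\Bigl[\sum_{t=1}^N \xi_t\Bigr]=q\,\E[N].
\]
Let $A$ be the success event. Success requires reaching $G$, and the goal has depth $L$. Every environment-updating action is induced by an oracle response, and only progress-enabling responses advance the trajectory. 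Hence on $A$ we have $P\ge L$. The one delicate point is that the statement says ``progress-enabling'' without fixing its meaning; I will read it as saying that each of the $L$ required actions needs its own progress-enabling response. Then
\[
q\,\E[N]=\E[P]\ge \E[P\mathbf 1_A]\ge L\Pr[A]\ge L\theta,
\]
so $\E[N]\ge\theta L/q$.

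\textbf{Lower bound on token cost.} The conditional expected cost of call $t$ given the history is at least $\TOK_{\min}$, and $\{N\ge t\}$ is history-measurable. The same Wald/tower argument as in Proposition~\ref{prop:dominant-component} therefore gives $\E[C]\ge \TOK_{\min}\E[N]\ge \TOK_{\min}\theta L/q$. Taking the infimum over $\Fset(\theta)$ yields the left inequality.

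\textbf{Upper bound.} I will construct a PTM that works through the stages in order. At each stage it queries the oracle until a progress-enabling response arrives and then applies the induced action. Full reversibility guarantees that no call causes goal loss. This is exactly the progress--retry--goal-loss model with $\pi_j=0$ and $p_j=q$, so Corollary~\ref{cor:homogeneous-progress-retry-goal-loss} gives success probability $1$ and $\E[N]=L/q$. Using the per-call cost bound $\TOK_{\max}$ in the same Wald argument gives cost at most $\TOK_{\max}L/q$. Since this machine achieves quality $1\ge\theta$, it lies in $\Fset(\theta)$, which gives the right inequality.

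\textbf{The limit.} When every call has the same expected cost $\TOK_{\mathrm{call}}$, set $\TOK_{\min}=\TOK_{\max}=\TOK_{\mathrm{call}}$. The bounds then squeeze $\kappa_T(\theta)$ between $\theta L\,\TOK_{\mathrm{call}}/q$ and $L\,\TOK_{\mathrm{call}}/q$, so the limit as $\theta\uparrow1$ equals $L\,\TOK_{\mathrm{call}}/q$.

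I expect the main obstacle to be justifying that each progress-enabling response contributes at most one unit toward the depth $L$, that is, that $P\ge L$ on success. This requires formalizing that non-progress calls cannot shorten the distance to $G$. The plan is to take this as the meaning of ``sole source of progress'' in the hypothesis and state it explicitly in the proof.
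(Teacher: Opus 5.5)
Your proposal is correct and follows the paper's proof essentially step for step. The steps are Wald's identity for the stopped sum of progress indicators together with $P\geq L$ on the success event (your chain $\E[P\mathbf 1_A]\geq L\Pr[A]$ is the paper's Markov step written out), the tower argument for the $\TOK_{\min}$ bound, the retry-until-progress machine with $\E[N]=L/q$ for the upper bound, and squeezing the two bounds for the limit; your reading of ``sole source of progress'' as forcing at least $L$ progress-enabling responses on success is also the interpretation the paper uses.
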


\begin{proof}
\emph{Upper bound.} Consider the SOTM that repeats oracle calls and applies
each progress-enabling response. Since the task is fully reversible, calls that
do not enable progress do not make the goal unreachable. The computation
reaches $G$ after $L$
progress-enabling responses. The number of calls $N$ needed to obtain
these $L$ responses is negative-binomial with $\E_{\SOTM}[N]=L/q$. Since each
call has conditional expected token cost at most $\TOK_{\max}(\alpha,\beta)$,
this gives the upper bound
\[
  \kappa_T(\theta;\alpha,\beta)
  \leq
  \TOK_{\max}(\alpha,\beta)\,\frac{L}{q}.
\]

\emph{Lower bound.} Consider any SOTM in $\Fset(\theta)$, that is, any feasible
SOTM achieving quality at least $\theta$, and let $N$ be its number of oracle
calls before halting. Let $Y_t$ be the indicator that the $t$-th call produces a
progress-enabling response. By the sole-source assumption, reaching $G$ requires
at least $L$ progress-enabling responses. Thus, on the event
$A=\{e_\sigma\in G\}$, we have
$\sum_{t=1}^{N}Y_t\geq L$.

Since the terminal score is the indicator of $A$, quality at least $\theta$
gives $\Pr[A]\geq\theta$. Hence
\[
  \theta
  \leq
  \Pr\!\left[\sum_{t=1}^{N}Y_t\geq L\right].
\]
By Wald's identity for the stopped sum of independent Bernoulli variables
\cite{Wald1947},
\[
  \E_{\SOTM}\!\left[\sum_{t=1}^{N}Y_t\right]
  =
  q\,\E_{\SOTM}[N].
\]
By Markov's inequality, for a nonnegative random variable
$Z$ and $a>0$, $\Pr[Z\geq a]\leq \E[Z]/a$. Applying this to
$Z=\sum_{t=1}^{N}Y_t$ and $a=L$ gives
\[
  \theta
  \leq
  \frac{q\,\E_{\SOTM}[N]}{L}.
\]
Thus $\E_{\SOTM}[N]\geq \theta L/q$. Applying the conditional lower bound call
by call gives expected token cost at least
$\TOK_{\min}(\alpha,\beta)\E_{\SOTM}[N]$, and hence at least
$\TOK_{\min}(\alpha,\beta)\theta L/q$. Taking the infimum over $\Fset(\theta)$
gives the lower bound.

If each call has the same conditional expected token cost
$\TOK_{\mathrm{call}}(\alpha,\beta)$, then the upper and lower bounds coincide
as $\theta\uparrow1$, giving the stated limit.
\end{proof}

\paragraph{Static environments.}
\label{sec:base}

In a static environment, the framework reduces to the stationary-oracle SOTM:
$E=\{e_0\}$ and $f(e_0,u)=e_0$ for every action $u$. In this case, there is no
separate agentic internal operation, so the agentic token-cost component is
zero.

\begin{proposition}[Base SOTM Recovery]
\label{thm:base}
Suppose the environment is static and the SOTM uses a stationary stochastic
oracle. Then there is no agentic internal operation.

Fix a full input context $\mathcal I$. Suppose the directing PTM can certify
whether a response is correct and stops at the first certified correct response.
If each oracle call independently succeeds with probability $p_{\mathcal I}>0$
and has expected token cost $\TOK_{\mathrm{call}}(\mathcal I;\alpha,\beta)$,
then, for the fixed input context $\mathcal I$, $\kag(1;\alpha,\beta)=0$,
$\kappa_T(1;\alpha,\beta)=\korch(1;\alpha,\beta)$, and the quality-$1$ token
complexity is
\[
  \kappa_T(1;\alpha,\beta)
  =
  \frac{\TOK_{\mathrm{call}}(\mathcal I;\alpha,\beta)}
       {p_{\mathcal I}}.
\]
\end{proposition}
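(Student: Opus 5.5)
The plan has three parts: show the agentic component vanishes, identify the total complexity with the orchestration complexity, and compute the quality-$1$ value by matching upper and lower bounds. Throughout, I use the certify-and-stop structure and work with the fixed input context $\mathcal I$.

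\emph{Agentic component and reduction.} Since the oracle is stationary and the environment is static ($E=\{e_0\}$ with $f(e_0,u)=e_0$), the oracle has a single hidden state. No environment change can affect it except through the query. By the discussion following Definition~\ref{def:agent}, its external behavior is that of a stationary stochastic oracle, and there is no separate internal transcript to charge. So for every SOTM in $\Fset(1)$, each call has $\TOK_t^{\mathrm{ag}}=0$, which gives $\Phi^{\mathrm{ag}}\equiv 0$ and hence $\kag(1)=0$. This is exactly the hypothesis of item 2 of Proposition~\ref{prop:dominant-component} with $\lambda_2=0$, so $\kappa_T(1)=\korch(1)$. It remains to compute $\korch(1)$.

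\emph{Upper bound.} Take the directing PTM that repeatedly issues the query and halts at the first certified correct response. Calls succeed independently with probability $p_{\mathcal I}>0$, so the number of calls $N$ is geometric with $\E[N]=1/p_{\mathcal I}$, and $N<\infty$ almost surely, giving quality $1$. Each call has expected cost $\TOK_{\mathrm{call}}(\mathcal I)$. Since $N$ is a stopping time with finite mean, Wald's identity gives total expected cost $\TOK_{\mathrm{call}}(\mathcal I)/p_{\mathcal I}$.

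\emph{Lower bound.} I read ``certified correct response'' as the only route to score $1$: any quality-$1$ SOTM must, almost surely, obtain at least one successful call before halting. Let $Y_t$ be the success indicator of call $t$, and let $N'$ be the index of the first success. Any feasible SOTM makes at least $N'$ calls. Under the model, each call is an independent trial with success probability $p_{\mathcal I}$, so $N'$ is geometric with mean $1/p_{\mathcal I}$. Then Wald's identity, with per-call expected cost $\TOK_{\mathrm{call}}(\mathcal I)$, gives expected cost at least $\TOK_{\mathrm{call}}(\mathcal I)\,\E[N']=\TOK_{\mathrm{call}}(\mathcal I)/p_{\mathcal I}$. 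This is the quality-$1$, $q=p_{\mathcal I}$, $L=1$ instance of the lower-bound argument in Theorem~\ref{thm:reversible}, which yields $\E[N]\ge \theta L/q=1/p_{\mathcal I}$. A static environment is trivially fully reversible, so that theorem applies.

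\emph{Main obstacle.} The delicate step is the lower bound. The statement assumes a fixed per-call success probability and cost, but an arbitrary SOTM could vary its queries. I would handle this by interpreting the hypotheses as applying to every call any SOTM makes on $\mathcal I$, so that the success probability and expected cost do not depend on query choice. With that reading, Theorem~\ref{thm:reversible} gives $\TOK_{\min}=\TOK_{\max}=\TOK_{\mathrm{call}}$, and its lower bound at $\theta=1$ matches the upper bound exactly. Without that reading, the statement should be understood as the complexity within this certify-and-retry model.
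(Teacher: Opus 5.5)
Your reduction ($\TOK_t^{\mathrm{ag}}\equiv 0$, hence $\kag(1)=0$ and $\kappa_T(1)=\korch(1)$) and your upper bound match the paper's proof. The upper bound is the certify-and-stop machine with geometric $N$, $\E[N]=1/p_{\mathcal I}$, times the per-call cost. Invoking item~2 of Proposition~\ref{prop:dominant-component} with $\lambda_2=0$ is a packaged form of the paper's direct observation $\Phi=\Phi^{\mathrm{orch}}$.

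Where you differ is the lower bound, which the paper's proof does not contain. The paper computes the expected cost of the certify-and-stop SOTM and identifies that value with $\kappa_T(1;\alpha,\beta)$. It therefore implicitly treats the complexity as the cost within the certify-and-retry model. Your first-success argument shows that this machine actually attains the infimum: for any quality-$1$ SOTM, $N'\le N$ almost surely, $N'$ is geometric, and Wald's identity applies with nonnegative per-call costs. Your explicit reading of the hypotheses is exactly what that step needs: every call of every SOTM on $\mathcal I$ succeeds independently with probability $p_{\mathcal I}$ at expected cost $\TOK_{\mathrm{call}}$, and score $1$ requires a successful call. The paper's route is shorter only because it leaves this assumption unstated.

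One caveat: the appeal to Theorem~\ref{thm:reversible} with $L=1$ is an analogy, not an application. When $E=\{e_0\}$, the goal set $G_x$ is defined by the existence of an output with score $1$, so a solvable task has goal depth $0$. There are no environment-updating actions for the sole-source hypothesis to govern, and the score depends on the output rather than on reaching $G$. Your direct first-success argument does not need that theorem, so rest the lower bound on it alone.
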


\begin{proof}
Because the SOTM is a stationary-oracle SOTM, there is no agentic oracle
internal operation, so $\TOK_t^{\mathrm{ag}} \equiv 0$ for every turn $t$.
Thus $\Phi^{\mathrm{ag}} \equiv 0$, so
$\kag = 0$ and the per-call token-cost decomposition gives
$\Phi=\Phi^{\mathrm{orch}}$, hence
$\kappa_T(1;\alpha,\beta)=\korch(1;\alpha,\beta)$ for the fixed input context
$\mathcal I$. By assumption, the directing PTM certifies each response and stops
at the first certified correct one, so each call succeeds independently with
probability $p_{\mathcal I}>0$. If $N$ is the number of calls before halting,
then $N$ is geometric with
$\E[N]=1/p_{\mathcal I}$. Multiplying the expected number of calls by the
per-call expected token cost gives, for the fixed input context $\mathcal I$ and
quality $\theta=1$, the displayed formula for $\kappa_T(1;\alpha,\beta)$.
\end{proof}

\paragraph{Unavoidable risk.}
\label{subsec:forced-risk}

We call a reachable state $e$ an \emph{unavoidable risk point} with goal-loss
risk level $p_{\mathrm{loss}}>0$ if every action sequence from $e$ that reaches
$G$ has, as its next action, an action on which the agentic oracle causes goal
loss with probability at least $p_{\mathrm{loss}}$. In such a state, every route
to the goal requires exposure to this goal-loss risk level.

\begin{theorem}[Task-Level Goal-Loss Converse]
\label{thm:converse}
Suppose every action sequence from $e_0$ that reaches $G$ passes through at least
$m$ unavoidable risk points. At each such point, suppose the conditional
probability of goal loss, given that no goal loss has occurred so far, is at
least $p_{\mathrm{loss}}\in(0,1)$. Then every strategy, represented by $M$,
satisfies
\[
  \Pr_M[e_\sigma \in G] ~\leq~ (1 - p_{\mathrm{loss}})^{m},
\]
so the task is infeasible at quality $\theta > (1 - p_{\mathrm{loss}})^{m}$.
For a sequence of tasks with $m\to\infty$, the upper bound tends to zero,
independently of token budget.
\end{theorem}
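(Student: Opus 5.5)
The plan is a chain-rule argument over the successive unavoidable risk points, in the same spirit as item~1 of Theorem~\ref{thm:goal-loss-avoidance}. Fix any strategy $M$ and consider the random environment trajectory it induces. Define stopping indices $\tau_1<\tau_2<\cdots$, where $\tau_k$ is the step at which the trajectory, while no goal loss has yet occurred, is at its $k$-th unavoidable risk point and applies the next action. Set $\tau_k=\infty$ if there is no such visit.

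Let $S_k$ be the event that $\tau_k<\infty$ and no goal loss has occurred through step $\tau_k$. By hypothesis, every action sequence from $e_0$ that reaches $G$ passes through at least $m$ unavoidable risk points. Hence the success event $\{e_\sigma\in G\}$ is contained in $S_m$. On success, the trajectory visits each of the $m$ points. At each such point $e$ with $R(e)=1$, the next action lies on a route to $G$, so by the definition of an unavoidable risk point it carries goal-loss probability at least $p_{\mathrm{loss}}$. Moreover, success requires that no goal loss ever occurs.

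The key step is the recursive bound
\[
  \Pr_M[S_k]\le (1-p_{\mathrm{loss}})\Pr_M[S_{k-1}],
\]
with $S_0$ the sure event. To prove it, condition on $S_{k-1}$ together with the event that the $k$-th risk point is reached with no loss so far. The assumed conditional goal-loss probability at that point is at least $p_{\mathrm{loss}}$, so surviving the $k$-th point has conditional probability at most $1-p_{\mathrm{loss}}$. The event of reaching the point is itself contained in $S_{k-1}$, and dropping that constraint only enlarges the bound, so the recursion follows. Iterating it $m$ times gives $\Pr_M[e_\sigma\in G]\le\Pr_M[S_m]\le(1-p_{\mathrm{loss}})^m$.

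Infeasibility then follows exactly as in the proof of Theorem~\ref{thm:depth}. With a binary score, quality equals $\Pr[e_\sigma\in G]$, so no strategy can achieve $\theta>(1-p_{\mathrm{loss}})^m$, and hence $\Fset(\theta)$ is empty. For the final claim, note that $(1-p_{\mathrm{loss}})^m\to0$ as $m\to\infty$. The bound never references the number of calls or any token budget, since retries at non-risk states do not change the survival factor at each risk point.

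The main obstacle is making the conditioning rigorous. The strategy is adaptive, the number of risk points actually visited is random, and the $k$-th visit occurs at a random time. The hypothesis is stated as a conditional probability ``given that no goal loss has occurred so far'' at each such point. I would formalize it as a statement conditional on the history $\mathcal F_{\tau_k}$ on the event $\{\tau_k<\infty\}$. I would then apply the tower property at the stopping times $\tau_k$, rather than an unconditional product.

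A second subtlety is trajectories that succeed while visiting fewer than $m$ risk points. The hypothesis excludes these, since every action sequence reaching $G$ must pass through at least $m$ such points. I would cite this exclusion explicitly to justify the containment $\{e_\sigma\in G\}\subseteq S_m$.
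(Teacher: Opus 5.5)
Your proposal is correct and follows the paper's route. The paper's proof also argues that every successful trajectory must avoid goal loss at each of the $m$ unavoidable risk points in order, and it multiplies the conditional survival probabilities to obtain $(1-p_{\mathrm{loss}})^m$. Your stopping-time events $S_k$ and the recursion $\Pr_M[S_k]\le(1-p_{\mathrm{loss}})\Pr_M[S_{k-1}]$ formalize that chain-rule product more carefully, and your infeasibility step makes explicit the binary-score assumption (quality equals $\Pr[e_\sigma\in G]$) that the paper leaves implicit.
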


\begin{proof}
Fix any strategy, represented by $M$. Any trajectory that reaches $G$ must pass
through at least $m$ unavoidable risk points $s_1,\dots,s_m$ in order. At each
such point, the goal is still reachable and the next action has goal-loss
probability at least $p_{\mathrm{loss}}$ given that no goal loss has occurred so
far. Reaching $G$ requires avoiding goal loss at all $m$ points, an event with
probability at most
$\prod_{i=1}^{m}(1 - p_{\mathrm{loss}}) = (1-p_{\mathrm{loss}})^m$. Hence
$\Pr_M[e_\sigma \in G] \leq (1-p_{\mathrm{loss}})^m$. Infeasibility and the limit
follow. This is the task-level converse to
Theorem~\ref{thm:goal-loss-avoidance}, item 3: that result bounds a strategy's
achievable quality by its accumulated risk exposure, while here the environment
imposes a minimum exposure under every strategy.
\end{proof}

\section{Conclusion, Future Directions, and Open Problems}
\label{sec:future}

This paper extends the SOTM framework with stationary stochastic oracles
\cite{Wang2026,Wang2026comp} to agentic SOTMs, whose oracles may have
autonomous goal-directed control and environment access. A central
distinction is that the PTM observes only the
query-response interface, while the agentic oracle may carry internal state,
perform internal operations, use tools or member models, and update an
environment. This creates two token-cost components: the orchestration token
cost visible to the PTM and the agentic token cost incurred inside the oracle.

The results show settings in which agentic SOTMs can have execution-time
token-cost advantages over SOTMs using stationary stochastic oracles on the same
task and at the same quality level. The advantage arises when state, intermediate
history, or environment-updating work need not be repeatedly encoded through
the query-response interface.
At the same time, autonomy introduces a new risk: actions may cause goal loss,
making the goal unachievable. Thus computing with agentic oracles involves both
execution-time token-cost advantages and goal-loss limitations.

Two broad future directions are especially important.

\paragraph{Multi-oracle SOTMs.}
This paper and the previous papers \cite{Wang2026,Wang2026comp} focus on
computing with a single oracle, either agentic or stationary stochastic. A
multi-oracle extension is especially important for vertical applications, where
a solution often coordinates several specialized systems rather than relying on
a single general-purpose oracle.

Formally, this would allow the PTM to coordinate several stationary stochastic
or agentic oracles. In such a model, the PTM would have access to oracles
$\Oracle_1,\ldots,\Oracle_m$, each through its own query-response tape. At each
oracle call step, the PTM could select one or more oracle indexes, send a query
or subquery to each selected oracle, and aggregate the corresponding responses.
The oracles would not communicate with one another; all routing, decomposition,
selection, and aggregation would be performed by the PTM.

Such a model could include stationary stochastic oracles, agentic oracles, or
hybrid collections of both, possibly supplied by different vendors or equipped
with different response distributions, tools, member models, or internal states.
Each oracle $\Oracle_j$ would come with its own query-dependent response
distributions and token-cost parameters $(\alpha_j,\beta_j)$, and possibly its
own tokenizer, reliability profile, tools, and internal state. This raises
several concrete problems: defining multi-oracle token complexity under heterogeneous
token-cost parameters; optimizing oracle-selection and query-routing policies;
extending the orchestration--agentic token-cost decomposition across several
oracles; analyzing goal-loss risk when several agentic oracles can update the
same environment; and comparing systems whose tokenizers or billing units are
not directly comparable.

\paragraph{Stochastic environments.}
Network resources, external services, concurrent processes, and other
interacting systems can make the environment stochastic: the same read or write
operation may lead to different observations or state updates on different
runs. Extending the deterministic environment model of this paper to such
stochastic environments is a natural next step. In such settings, goal-loss
probabilities and token costs may depend not only on the oracle's internal
randomness but also on random environment transitions. A key question is how to
separate oracle randomness from environment randomness.

Beyond these two broad extensions, several more specific problems remain open.

\paragraph{Cost--performance correlation in agentic operation.}
In the present model, expected agentic token cost is treated as a function of
the query, internal state, and environment state. In practice, additional
internal computation may raise the task score or improve the usefulness
of the final response to the PTM, correlating $\TOK_t^{\mathrm{ag}}$ with downstream task
performance. Characterizing this cost--performance tradeoff, and how the PTM
should exploit it when choosing queries, remains open.

\paragraph{Compositional agents and multi-level token complexity.}
The present model treats the internal mechanism of an agentic oracle
behaviorally. In more complex systems, an agentic oracle may delegate to
sub-agents, which may themselves delegate further. The two-component cost split
of Section~\ref{sec:decomp} should then extend to a multi-level recursion: the
total token cost of a depth-$d$ hierarchy should include orchestration token
costs at each level and agentic token costs at the leaves. A composition theorem
bounding root token complexity by the per-level orchestration costs and leaf
agentic token costs, and identifying when the recursion collapses to the
stationary-oracle SOTM framework, is a natural target.

\paragraph{Unknown goal-loss probabilities.}
Theorem~\ref{thm:converse} treats the goal-loss probability
$p_{\mathrm{loss}}$ and the count $m$ of unavoidable risk points as quantities
determined by the environment and oracle behavior. Estimating these quantities
from interaction, so that the PTM can decide whether to attempt or continue a
task under a given token budget and target quality, is an open problem.

\bibliographystyle{plainnat}
\bibliography{reference}

\end{document}